\newtheorem{lemma}{Lemma}
\newtheorem{theorem}{Theorem}
\newtheorem{corollary}{Corollary}
\newtheorem{definition}{Definition}
\newtheorem{remark}{Remark}
\newcommand{\supp}{\text{supp}}
\newcommand{\argmin}{\mathop{\arg\min}}
\newcommand{\argmax}{\mathop{\arg\max}}
\numberwithin{equation}{section}
\numberwithin{table}{section}
\numberwithin{figure}{section}
\newcommand{\junk}[1]{{}}
\newlength{\fwtwo} \setlength{\fwtwo}{0.45\textwidth}
\title{Dual Iterative Hard Thresholding: From Non-convex Sparse Minimization to Non-smooth Concave Maximization}
\author{
  Bo Liu$^{1}$,\ \  Xiao-Tong-Yuan$^{2}$, \ \ Lezi Wang$^{1}$ \ \ Qingshan Liu$^{2}$ \ \ Dimitris N. Metaxas$^{2}$\\\\
  1. Department of Computer Science, Rutgers University \\
  \\
  \and
  2. B-DAT Lab, Nanjing University of Information Science \& Technology \\
 }
\date{}
\begin{document}

\maketitle

\begin{abstract}
Iterative Hard Thresholding (IHT) is a class of projected gradient descent methods for optimizing sparsity-constrained minimization models, with the best known efficiency and scalability in practice. As far as we know, the existing IHT-style methods are designed for sparse minimization in primal form. It remains open to explore duality theory and algorithms in such a non-convex and NP-hard problem setting. In this paper, we bridge this gap by establishing a duality theory for sparsity-constrained minimization with $\ell_2$-regularized loss function and proposing an IHT-style algorithm for dual maximization. Our sparse duality theory provides a set of sufficient and necessary conditions under which the original NP-hard/non-convex problem can be equivalently solved in a dual formulation. The proposed dual IHT algorithm is a super-gradient method for maximizing the non-smooth dual objective. An interesting finding is that the sparse recovery performance of dual IHT is invariant to the Restricted Isometry Property (RIP), which is required by virtually all the existing primal IHT algorithms without sparsity relaxation. Moreover, a stochastic variant of dual IHT is proposed for large-scale stochastic optimization. Numerical results demonstrate the superiority of dual IHT algorithms to the state-of-the-art primal IHT-style algorithms in model estimation accuracy and computational efficiency.
\end{abstract}

\subparagraph{Key words.} Distributed Optimization, Structured-Sparsity, ADMM, Image Segmentation.


\section{Introduction}
Sparse learning has emerged as an effective approach to alleviate model overfitting when feature dimension outnumbers training sample. Given a set of training samples$\{(x_i, y_i)\}_{i=1}^N$ in which $x_i\in \mathbb{R}^d$ is the feature representation and $y_i\in \mathbb{R}$ the corresponding label, the following sparsity-constrained $\ell_2$-norm regularized loss minimization problem is often considered in high-dimensional analysis:
\begin{equation}\label{prob:general}
\min_{\|w\|_0 \le k} P(w):= \frac{1}{N}\sum\limits_{i=1}^N l(w^\top x_i,y_i) + \frac{\lambda}{2} \|w\|^2.
\end{equation}
Here $l(\cdot;\cdot)$ is a convex loss function, $w\in \mathbb{R}^d$ is the model parameter vector and $\lambda$ controls the regularization strength. For example, the squared loss $l(a,b)=(b-a)^2$ is used in linear regression and the hinge loss $l(a,b)=\max\{0,1-ab\}$ in support vector machines. Due to the presence of cardinality constraint $\|w\|_0\le k$, the problem~\eqref{prob:general} is simultaneously non-convex and NP-hard in general, and thus is challenging for optimization. A popular way to address this challenge is to use proper convex relaxation, e.g., $\ell_1$ norm~\citep{tibshirani1996regression} and $k$-support norm~\citep{argyriou2012sparse}, as an alternative of the cardinality constraint. However, the convex relaxation based techniques tend to introduce bias for parameter estimation.

In this paper, we are interested in algorithms that directly minimize the non-convex formulation in~\eqref{prob:general}. Early efforts mainly lie in compressed sensing for signal recovery, which is a special case of~\eqref{prob:general} with squared loss. Among others, a family of the so called Iterative Hard Thresholding (IHT) methods~\citep{blumensath2009iterative,foucart2011hard} have gained significant interests and they have been witnessed to offer the fastest and most scalable solutions in many cases. More recently, IHT-style methods have been generalized to handle generic convex loss functions~\citep{beck2013sparsity,Yuan-ICML-2014,jain2014iterative} as well as structured sparsity constraints~\citep{jain2016structured}. The common theme of these methods is to iterate between gradient descent and hard thresholding to maintain sparsity of solution while minimizing the objective value.

Although IHT-style methods have been extensively studied, the state-of-the-art is only designed for the primal formulation~\eqref{prob:general}. It remains an open problem to investigate the feasibility of solving the original NP-hard/non-convex formulation in a dual space that might potentially further improve computational efficiency. To fill this gap, inspired by the recent success of dual methods in regularized learning problems, we systematically build a sparse duality theory and propose an IHT-style algorithm along with its stochastic variant for dual optimization.

\noindent\textbf{Overview of our contribution.} The core contribution of this work is two-fold in theory and algorithm. As the theoretical contribution, we have established a novel sparse Lagrangian duality theory for the NP-hard/non-convex problem~\eqref{prob:general} which to the best of our knowledge has not been reported elsewhere in literature. We provide in this part a set of \emph{sufficient and necessary} conditions under which one can safely solve the original non-convex problem through maximizing its concave dual objective function. As the algorithmic contribution, we propose the dual IHT (DIHT) algorithm as a super-gradient method to maximize the non-smooth dual objective. In high level description, DIHT iterates between dual gradient ascent and primal hard thresholding pursuit until convergence. A stochastic variant of DIHT is proposed to handle large-scale learning problems. For both algorithms, we provide non-asymptotic convergence analysis on parameter estimation error, sparsity recovery, and primal-dual gap as well. In sharp contrast to the existing analysis for primal IHT-style algorithms, our analysis is not relying on Restricted Isometry Property (RIP) conditions and thus is less restrictive in real-life high-dimensional statistical settings. Numerical results on synthetic datasets and machine learning benchmark datasets demonstrate that dual IHT significantly outperforms the state-of-the-art primal IHT algorithms in accuracy and efficiency. The theoretical and algorithmic contributions of this paper are highlighted in below:
\begin{itemize}
  \item Sparse Lagrangian duality theory: we established a sparse saddle point theorem (Theorem~\ref{thrm:sparse_saddle_point}), a sparse mini-max theorem (Theorem~\ref{thrm:sparse_mini_max}) and a sparse strong duality theorem (Theorem~\ref{thrm:sparse_strong_duality}).
  \item Dual optimization: we proposed an IHT-style algorithm along with its stochastic extension for non-smooth dual maximization. These algorithms have been shown to converge at the rate of $\frac{1}{\epsilon} \ln \frac{1}{\epsilon}$ in dual parameter estimation error and $\frac{1}{\epsilon^2} \ln \frac{1}{\epsilon^2}$ in primal-dual gap (see Theorem~\ref{thrm:DIHT_Conv} and Theorem~\ref{thrm:SDIHT_Conv}). These guarantees are invariant to RIP conditions which are required by virtually all the primal IHT-style methods without using relaxed sparsity levels.
\end{itemize}
\noindent\textbf{Notation.} Before continuing, we define some notations to be used. Let $x \in \mathbb{R}^d$ be a vector and $F$ be an index set. We use $\mathrm{H}_F(x)$ to denote the truncation operator that restricts $x$ to the set $F$. $\mathrm{H}_k(x)$ is a truncation operator which preserves the top $k$ (in magnitude) entries of $x$ and sets the remaining to be zero. The notation $\supp(x)$ represents the index set of nonzero entries of $x$. We conventionally define $\|x\|_\infty = \max_{i}|[x]_i|$ and define $x_{\min}=\min_{i \in \supp(x)} |[x]_i|$. For a matrix $A$, $\sigma_{\max}(A)$ ($\sigma_{\min}(A)$) denotes its largest (smallest) singular value.

\noindent\textbf{Organization.} The rest of this paper is organized as follows: In \S\ref{sect:relatedwork} we briefly review some relevant work. In \S\ref{sect:theory} we develop a Lagrangian duality theory for sparsity-constrained minimization problems. The dual IHT-style algorithms along with convergence analysis are presented in \S\ref{sect:algorithm}. The numerical evaluation results are reported in \S\ref{sect:experiment}. Finally, the concluding remarks are made in \S\ref{sect:conclusion}. All the technical proofs are deferred to the appendix sections.

\section{Related Work}
\label{sect:relatedwork}
For generic convex objective beyond quadratic loss, the rate of convergence and parameter estimation error of IHT-style methods were established under proper RIP (or restricted strong condition number) bound conditions~\citep{Blumensath-TIT-2013,Yuan-ICML-2014,yuan2016exact}. In~\citep{jain2014iterative}, several relaxed variants of IHT-style algorithms were presented for which the estimation consistency can be established without requiring the RIP conditions. In~\citep{bahmani2013greedy}, a gradient support pursuit algorithm is proposed and analyzed. In large-scale settings where a full gradient evaluation on all data becomes a bottleneck, stochastic and variance reduction techniques have been adopted to improve the computational efficiency of IHT~\citep{nguyen2014linear,li2016stochastic,chen2016accelerated}.

Dual optimization algorithms have been widely used in various learning tasks including SVMs~\citep{hsieh2008dual} and multi-task learning~\citep{lapin2014scalable}. In recent years, stochastic dual optimization methods have gained significant attention in large-scale machine learning~\citep{shalev2013accelerated,shalev2013stochastic}. To further improve computational efficiency, some primal-dual methods are developed to alternately minimize the primal objective and maximize the dual objective. The successful examples of primal-dual methods include learning total variation regularized model~\citep{chambolle2011first} and generalized Dantzig selector~\citep{lee2015fast}. More recently, a number of stochastic variants~\citep{zhang2015stochastic,yu2015doubly} and parallel variants~\citep{zhu2015stochastic} were developed to make the primal-dual algorithms more scalable and efficient.

\section{A Sparse Lagrangian Duality Theory}
\label{sect:theory}
In this section, we establish weak and strong duality theory that guarantees the original non-convex and NP-hard problem in~\eqref{prob:general} can be equivalently solved in a dual space. The results in this part build the theoretical foundation of developing dual IHT methods.

From now on we abbreviate $l_i(w^\top x_i) = l(w^\top x_i,y_i)$. The convexity of $l(w^\top x_i,y_i)$ implies that $l_i(u)$ is also convex. Let $l^*_i(\alpha_i)=\max\limits_{u}\{\alpha_iu - l_i(u)\}$ be the convex conjugate of $l_i(u)$ and $\mathcal{F}\subseteq \mathbb{R}$ be the feasible set of $\alpha_i$. According to the well-known expression of $l_i(u)=\max_{\alpha_i \in \mathcal{F}} \left\{\alpha_i u - l^*_i(\alpha_i)\right\}$, the problem~\eqref{prob:general} can be reformulated into the following mini-max formulation:
\begin{equation}\label{eqn:primaldual}
\min_{\|w\|_0 \le k} \frac{1}{N}\sum\limits_{i=1}^N\max_{\alpha_i \in \mathcal{F}}\{\alpha_i w^\top x_i - l^*_i(\alpha_i)\} + \frac{\lambda}{2}\|w\|^2.
\end{equation}
The following Lagrangian form will be useful in analysis:
\[
L(w,\alpha) = \frac{1}{N}\sum\limits_{i=1}^N\left(\alpha_iw^\top x_i - l^*_i(\alpha_i)\right) + \frac{\lambda}{2}\|w\|^2,
\]
where $\alpha=[\alpha_1,...,\alpha_N] \in \mathcal{F}^N$ is the vector of dual variables. We now introduce the following concept of sparse saddle point which is a restriction of the conventional saddle point to the setting of sparse optimization.
\begin{definition}[Sparse Saddle Point]
A pair $(\bar w, \bar \alpha)\in \mathbb{R}^d \times \mathcal{F}^N$ is said to be a $k$-sparse saddle point for $L$ if $\|\bar w\|_0 \le k$ and the following holds for all $\|w\|_0 \le k, \alpha \in \mathcal{F}^N$:
\begin{equation}\label{inequat:sparse_saddle_point}
L(\bar w, \alpha) \le L(\bar w, \bar \alpha) \le L(w, \bar \alpha).
\end{equation}
\end{definition}
Different from the conventional definition of saddle point, the $k$-sparse saddle point only requires the inequality~\eqref{inequat:sparse_saddle_point} holds for any arbitrary $k$-sparse vector $w$. The following result is a basic sparse saddle point theorem for $L$. Throughout the paper, we will use $f'(\cdot)$ to denote a sub-gradient (or super-gradient) of a convex (or concave) function $f(\cdot)$, and use $\partial f(\cdot)$ to denote its sub-differential (or super-differential).
\begin{theorem}[Sparse Saddle Point Theorem]\label{thrm:sparse_saddle_point}
Let $\bar w \in \mathbb{R}^d$ be a $k$-sparse primal vector and $\bar \alpha \in \mathcal{F}^N$ be a dual vector. Then the pair $(\bar w, \bar \alpha)$ is a sparse saddle point for $L$ if and only if the following conditions hold:
\begin{itemize}
  \item[(a)] $\bar w$ solves the primal problem in~\eqref{prob:general};
  \item[(b)] $\bar \alpha \in [\partial l_1(\bar w^\top x_1),...,\partial l_N(\bar w^\top x_N)]$;
  \item[(c)] $\bar w = \mathrm{H}_k\left(-\frac{1}{\lambda N}\sum_{i=1}^N \bar\alpha_i x_i\right)$.
\end{itemize}
\end{theorem}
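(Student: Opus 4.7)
The plan is to prove both directions by separately analyzing what each of the two saddle inequalities encodes, and to use Fenchel--Young conjugate duality to connect $L(\bar w,\bar\alpha)$ with the primal objective $P(\bar w)$.

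For the forward direction, I assume $(\bar w,\bar\alpha)$ is a $k$-sparse saddle point. The left inequality $L(\bar w,\alpha)\le L(\bar w,\bar\alpha)$ for every $\alpha\in\mathcal{F}^N$ says that, holding $\bar w$ fixed, $\bar\alpha_i$ maximizes $\alpha_i\,\bar w^\top x_i-l_i^*(\alpha_i)$ for each $i$. By the conjugate optimality condition this is equivalent to $\bar w^\top x_i\in\partial l_i^*(\bar\alpha_i)$, or equivalently $\bar\alpha_i\in\partial l_i(\bar w^\top x_i)$, which is exactly condition (b). Fenchel--Young at equality then gives $\bar\alpha_i\bar w^\top x_i-l_i^*(\bar\alpha_i)=l_i(\bar w^\top x_i)$, so $L(\bar w,\bar\alpha)=P(\bar w)$. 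For the right inequality, I complete the square to write
\[
L(w,\bar\alpha)=\frac{\lambda}{2}\Bigl\|w-v\Bigr\|^2+C,\qquad v:=-\frac{1}{\lambda N}\sum_{i=1}^N\bar\alpha_i x_i,
\]
with $C$ independent of $w$. The inequality then says $\bar w$ is a minimizer of $\|w-v\|^2$ over $\{w:\|w\|_0\le k\}$; by the standard characterization of best $k$-term approximation, this is precisely $\bar w=\mathrm{H}_k(v)$, giving (c). Finally, for (a), I combine the identity $L(\bar w,\bar\alpha)=P(\bar w)$ with the right saddle inequality and the fact that $L(w,\bar\alpha)\le\max_\alpha L(w,\alpha)=P(w)$ for any $w$, obtaining $P(\bar w)\le P(w)$ for every $k$-sparse $w$.

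For the reverse direction, I assume (a)--(c). Condition (b) together with Fenchel--Young equality gives $\bar\alpha_i\bar w^\top x_i-l_i^*(\bar\alpha_i)=l_i(\bar w^\top x_i)$, so $L(\bar w,\bar\alpha)=P(\bar w)$. Then for any $\alpha\in\mathcal{F}^N$, Fenchel--Young inequality yields $\alpha_i\bar w^\top x_i-l_i^*(\alpha_i)\le l_i(\bar w^\top x_i)$ term by term, which upon summing and adding $\frac{\lambda}{2}\|\bar w\|^2$ produces the left saddle inequality $L(\bar w,\alpha)\le L(\bar w,\bar\alpha)$. For the right inequality, I again write $L(w,\bar\alpha)=\frac{\lambda}{2}\|w-v\|^2+C$; condition (c) states $\bar w=\mathrm{H}_k(v)$, which is a minimizer of $\|w-v\|^2$ over $k$-sparse $w$, hence $L(\bar w,\bar\alpha)\le L(w,\bar\alpha)$ for all such $w$. (Condition (a) is consistent with, and in fact follows from, (b) and (c) in this direction, so it does not create any obstruction.)

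The only subtle step is the right-side verification, where I must justify that the hard thresholding operator $\mathrm{H}_k$ genuinely solves $\min_{\|w\|_0\le k}\|w-v\|^2$; this is immediate from a coordinate-wise argument once the quadratic-plus-linear part of $L(\cdot,\bar\alpha)$ is recognized as a single scaled squared distance to $v$. The rest is bookkeeping with Fenchel--Young, which is tight exactly under (b). No RIP-type or smoothness assumption on $l_i$ enters, so the theorem holds at the generality stated.
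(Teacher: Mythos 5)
Your proof is correct and follows essentially the same route as the paper's: Fenchel--Young equality/inequality (the maximizing-argument property of the convex conjugate) handles the dual-variable side, and completing the square in $w$ together with the best-$k$-term-approximation characterization of $\mathrm{H}_k$ handles the primal-variable side. The only cosmetic difference is ordering in the forward direction --- you extract (b) directly from the left saddle inequality before establishing (a), whereas the paper derives (a) first via a chain of inequalities through $P$ --- and your observation that (a) is redundant given (b) and (c) is consistent with the paper, whose converse argument likewise never actually uses (a).
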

\begin{proof}
A proof of this result is given in Appendix~\ref{append:proof_sparse_saddle_point}.
\end{proof}
\begin{remark}
Theorem~\ref{thrm:sparse_saddle_point} shows that the conditions~(a)$\sim$(c) are sufficient and necessary to guarantee the existence of a sparse saddle point for the Lagrangian form $L$. This result is different from from the traditional saddle point theorem which requires the use of the Slater Constraint Qualification to guarantee the existence of saddle point.
\end{remark}
\begin{remark}
Let us consider $P'(\bar w)=\frac{1}{N}\sum_{i=1}^N \bar\alpha_i x_i + \lambda \bar w \in \partial P(\bar w)$. Denote $\bar F = \supp(\bar w)$. It is easy to verify that the condition~(c) in Theorem~\ref{thrm:sparse_saddle_point} is equivalent to
\[
\mathrm{H}_{\bar F}(P'(\bar w)) =0, \ \ \ \bar w_{\min} \ge \frac{1}{\lambda}\|P'(\bar w)\|_\infty.
\]
\end{remark}

The following sparse mini-max theorem guarantees that the min and max in~\eqref{eqn:primaldual} can be safely switched if and only if there exists a sparse saddle point for $L(w,\alpha)$.
\begin{theorem}[Sparse Mini-Max Theorem]\label{thrm:sparse_mini_max}
The mini-max relationship
\begin{equation}\label{equat:mini-max}
\max_{\alpha \in \mathcal{F}^N} \min_{\|w\|_0\le k} L(w, \alpha) =\min_{\|w\|_0\le k}\max_{\alpha \in \mathcal{F}^N} L(w,\alpha)
\end{equation}
holds if and only if there exists a sparse saddle point $(\bar w, \bar \alpha)$ for $L$.
\end{theorem}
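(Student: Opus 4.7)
My plan is to prove the two directions of the equivalence separately. The forward implication ``sparse saddle point $\Rightarrow$ mini-max equality'' should follow from a routine chain of inequalities, while the reverse implication is the harder part: I will construct a saddle point by choosing a primal optimizer $\bar w$ and a dual optimizer $\bar \alpha$ and using the assumed equality to collapse a sandwich of inequalities, after which the saddle-point inequalities fall out directly from optimality. Theorem~\ref{thrm:sparse_saddle_point} can additionally be invoked to certify the construction via its conditions~(a)--(c).

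For the forward direction, the weak-duality bound
\[
\max_{\alpha \in \mathcal{F}^N} \min_{\|w\|_0 \le k} L(w,\alpha) \;\le\; \min_{\|w\|_0 \le k} \max_{\alpha \in \mathcal{F}^N} L(w,\alpha)
\]
is automatic, since for any feasible $(w',\alpha')$ one has $\min_{\|w\|_0 \le k} L(w,\alpha') \le L(w',\alpha') \le \max_\alpha L(w',\alpha)$; taking max in $\alpha'$ on the left and min over $k$-sparse $w'$ on the right yields the estimate without any structural assumption. For the reverse inequality I chain through the given saddle point $(\bar w,\bar \alpha)$:
\[
\max_\alpha \min_{\|w\|_0 \le k} L(w,\alpha) \;\ge\; \min_{\|w\|_0 \le k} L(w,\bar \alpha) \;\ge\; L(\bar w,\bar \alpha) \;\ge\; \max_\alpha L(\bar w,\alpha) \;\ge\; \min_{\|w\|_0 \le k} \max_\alpha L(w,\alpha),
\]
where the middle two steps are exactly the definition of a $k$-sparse saddle point applied to $(\bar w,\bar \alpha)$.

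For the reverse direction, observe first that $\min_{\|w\|_0 \le k}\max_\alpha L(w,\alpha) = \min_{\|w\|_0 \le k} P(w)$ via the conjugate identity $\max_{\alpha_i \in \mathcal{F}}\{\alpha_i u - l^*_i(\alpha_i)\} = l_i(u)$, so the right-hand side is attained at a $k$-sparse primal minimizer $\bar w$ by coercivity of the $\ell_2$-regularized objective. Let $\bar \alpha$ maximize the concave function $g(\alpha) := \min_{\|w\|_0 \le k} L(w,\alpha)$. Under the assumed mini-max equality $g(\bar \alpha) = P(\bar w)$, and the sandwich
\[
g(\bar \alpha) \;=\; \min_{\|w\|_0 \le k} L(w,\bar \alpha) \;\le\; L(\bar w,\bar \alpha) \;\le\; \max_\alpha L(\bar w,\alpha) \;=\; P(\bar w) \;=\; g(\bar \alpha)
\]
collapses to equalities, which immediately yields $L(\bar w,\alpha) \le L(\bar w,\bar \alpha) \le L(w,\bar \alpha)$ for all $\alpha \in \mathcal{F}^N$ and all $\|w\|_0 \le k$; that is, $(\bar w,\bar \alpha)$ is a $k$-sparse saddle point. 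The main obstacle I anticipate is justifying the existence of the dual maximizer $\bar \alpha$: although $g$ is concave as a pointwise infimum of affine functions of $\alpha$, it need not be coercive on $\mathcal{F}^N$ in general, so I will appeal to standard compactness/continuity arguments on the effective domain of the $l^*_i$'s (which is bounded for common losses such as hinge and logistic, and quadratically penalized for squared loss), or alternatively read $\bar \alpha$ off from the primal $\bar w$ via $\bar \alpha_i \in \partial l_i(\bar w^\top x_i)$ as prescribed by condition~(b) of Theorem~\ref{thrm:sparse_saddle_point}, which together with condition~(c) (hard-thresholding of the linear term arising in the quadratic $L(\cdot,\bar\alpha)$) furnishes an alternative direct verification.
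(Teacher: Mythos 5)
Your proof is correct and takes essentially the same route as the paper's: weak duality plus a chain of inequalities through the saddle point for the forward direction, and for the converse, choosing $\bar w$ and $\bar\alpha$ as the respective primal and dual optimizers and collapsing the resulting sandwich of inequalities into equalities. Your extra care about attainment of the dual maximizer (which the paper assumes silently, though $\mathcal{F}$ is bounded throughout, cf.\ $r=\max_{a\in\mathcal{F}}|a|$) is welcome; just note that your fallback of constructing $\bar\alpha$ from condition~(b) alone would still require verifying condition~(c) separately, so the sandwich argument is the one to keep.
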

\begin{proof}
A proof of this result is given in Appendix~\ref{append:proof_sparse_mini_max}.
\end{proof}
The sparse mini-max result in Theorem~\ref{thrm:sparse_mini_max} provides sufficient and necessary conditions under which one can safely exchange a min-max for a max-min, in the presence of sparsity constraint. The following corollary is a direct consequence of applying Theorem~\ref{thrm:sparse_saddle_point} to Theorem~\ref{thrm:sparse_mini_max}.
\begin{corollary}\label{corol:sparse_mini_max}
The mini-max relationship
\[
\max_{\alpha \in \mathcal{F}^N} \min_{\|w\|_0\le k} L(w, \alpha) =\min_{\|w\|_0\le k}\max_{\alpha \in \mathcal{F}^N} L(w,\alpha)
\]
holds if and only if there exist a $k$-sparse primal vector $\bar w \in \mathbb{R}^d$ and a dual vector $\bar \alpha \in \mathcal{F}^N$ such that the conditions~(a)$\sim$(c) in Theorem~\ref{thrm:sparse_saddle_point} are satisfied.
\end{corollary}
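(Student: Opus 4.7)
The plan is to derive this corollary purely by chaining the two preceding equivalences, since Theorem~\ref{thrm:sparse_mini_max} already reduces the mini-max identity to the existence of a sparse saddle point, and Theorem~\ref{thrm:sparse_saddle_point} characterizes any such saddle point through conditions (a)--(c). I would therefore present the argument as a single biconditional and handle each direction in one line of reasoning, without revisiting the underlying Lagrangian computations.

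For the forward direction, I would assume the mini-max equality holds. Theorem~\ref{thrm:sparse_mini_max} then guarantees a sparse saddle point $(\bar w, \bar \alpha) \in \mathbb{R}^d \times \mathcal{F}^N$ with $\|\bar w\|_0 \le k$. Feeding this pair into Theorem~\ref{thrm:sparse_saddle_point} immediately delivers conditions (a)--(c). For the reverse direction, I would begin with an arbitrary pair $(\bar w, \bar \alpha)$ satisfying (a)--(c); Theorem~\ref{thrm:sparse_saddle_point} certifies it as a sparse saddle point, and Theorem~\ref{thrm:sparse_mini_max} then returns the desired mini-max equality.

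There is essentially no substantive obstacle here: all the analytic heavy lifting has already been absorbed into Theorems~\ref{thrm:sparse_saddle_point} and~\ref{thrm:sparse_mini_max}. The only point requiring attention is to invoke each theorem in the correct logical direction and to verify that the $k$-sparsity of $\bar w$ is transported faithfully along the chain, which it is, since both the definition of a sparse saddle point and the hypotheses of Theorem~\ref{thrm:sparse_saddle_point} explicitly enforce $\|\bar w\|_0 \le k$. Accordingly, I would keep the write-up to a short two-way implication rather than a multi-step argument, noting in passing that this corollary is precisely the packaging of Theorems~\ref{thrm:sparse_saddle_point} and~\ref{thrm:sparse_mini_max} into the form most convenient for the dual IHT developments in the next section.
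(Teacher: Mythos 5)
Your proposal is correct and is exactly how the paper treats this corollary: the text introduces it as ``a direct consequence of applying Theorem~\ref{thrm:sparse_saddle_point} to Theorem~\ref{thrm:sparse_mini_max}'' and gives no further argument, which matches your two-way chaining of the saddle-point characterization with the mini-max equivalence. No gaps; the short write-up you describe is all that is needed.
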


The mini-max result in Theorem~\ref{thrm:sparse_mini_max} can be used as a basis for establishing sparse duality theory. Indeed, we have already shown the following:
\[
\min_{\|w\|_0 \le k}\max_{\alpha \in \mathcal{F}^N} L(w,\alpha) = \min_{\|w\|_0\le k} P(w).
\]
This is called the \emph{primal} minimization problem and it is the min-max side of the sparse mini-max theorem. The other side, the max-min problem, will be called as the \emph{dual} maximization problem with dual objective function $D(\alpha):=\min_{\|w\|_0 \le k} L(w, \alpha)$, i.e.,
\begin{equation}\label{prob:dual}
\max_{\alpha \in \mathcal{F}^N}D(\alpha)=\max_{\alpha \in \mathcal{F}^N}\min_{\|w\|_0 \le k} L(w, \alpha).
\end{equation}
The following Lemma~\ref{lemma:concave} shows that the dual objective function $D(\alpha)$ is concave and explicitly gives the expression of its super-differential.
\begin{lemma}\label{lemma:concave}
The dual objective function $D(\alpha)$ is given by
\[
D(\alpha)= \frac{1}{N}\sum_{i=1}^N - l^*_i(\alpha_i) - \frac{\lambda}{2}\|w(\alpha)\|^2,
\]
where $ w(\alpha) = \mathrm{H}_k\left(-\frac{1}{N\lambda} \sum_{i=1}^N \alpha_i x_i \right)$.
Moreover, $D(\alpha)$ is concave and its super-differential is given by
\[
\partial D(\alpha) = \frac{1}{N} [w(\alpha)^\top x_1 - \partial l^*_1(\alpha_1),...,w(\alpha)^\top x_N - \partial l^*_N(\alpha_N)].
\]
Particularly, if $w(\alpha)$ is unique at $\alpha$ and $\{l^*_i\}_{i=1,...,N}$ are differentiable, then $\partial D(\alpha)$ is unique and it is the super-gradient of $D(\alpha)$.
\end{lemma}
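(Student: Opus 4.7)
The plan is to prove the lemma in three stages: derive the closed form of $D(\alpha)$, conclude concavity by an infimum argument, and compute the super-differential via a Danskin-type substitution.

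First I would compute $D(\alpha)$ explicitly. Writing $v(\alpha) := \frac{1}{N}\sum_{i=1}^N \alpha_i x_i$, the Lagrangian separates as
\[
L(w,\alpha) = w^\top v(\alpha) + \frac{\lambda}{2}\|w\|^2 - \frac{1}{N}\sum_{i=1}^N l_i^*(\alpha_i).
\]
Completing the square in $w$ reduces the inner problem $\min_{\|w\|_0\le k} L(w,\alpha)$ to the $k$-sparse projection $\min_{\|w\|_0\le k}\|w+v(\alpha)/\lambda\|^2$, whose minimizer is exactly $w(\alpha)=\mathrm{H}_k(-v(\alpha)/\lambda)$. On $\supp(w(\alpha))$ the vector $w(\alpha)$ equals $-v(\alpha)/\lambda$ and vanishes elsewhere, so $w(\alpha)^\top v(\alpha) = -\lambda\|w(\alpha)\|^2$. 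Substituting this identity back collapses the three terms into the stated expression $D(\alpha) = -\frac{1}{N}\sum_i l_i^*(\alpha_i) - \frac{\lambda}{2}\|w(\alpha)\|^2$.

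Next I would note that for every fixed $k$-sparse $w$, the map $\alpha\mapsto L(w,\alpha)$ is the sum of a linear term in $\alpha$ and $-\frac{1}{N}\sum_i l_i^*(\alpha_i)$; since each $l_i^*$ is convex as a Fenchel conjugate, $L(w,\cdot)$ is concave in $\alpha$. Hence $D(\alpha)=\inf_{\|w\|_0\le k}L(w,\alpha)$ is the pointwise infimum of a family of concave functions, and is itself concave. The separable structure of $L$ also yields
\[
\partial_\alpha L(w,\alpha) = \frac{1}{N}\bigl[w^\top x_1 - \partial l_1^*(\alpha_1),\ldots, w^\top x_N - \partial l_N^*(\alpha_N)\bigr].
\]
Because $w(\alpha)$ attains the infimum, the map $\alpha'\mapsto L(w(\alpha),\alpha')$ is a concave majorant of $D$ that coincides with $D$ at $\alpha'=\alpha$; a standard tangent-majorant argument then shows every element of $\partial_\alpha L(w(\alpha),\alpha)$ is a super-gradient of $D$ at $\alpha$. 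When $w(\alpha)$ is unique and each $l_i^*$ is differentiable, the right-hand set collapses to a single vector, and concavity of $D$ forces $\partial D(\alpha)$ to be exactly this unique super-gradient.

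The main obstacle I anticipate is the reverse inclusion in the super-differential identity in full generality. Since the $k$-sparse constraint set is non-convex and $\mathrm{H}_k$ is multi-valued when the top-$k$ magnitudes of $-v(\alpha)/\lambda$ are tied, at such degenerate $\alpha$ the correct characterization of $\partial D(\alpha)$ is the convex hull of $\partial_\alpha L(w,\alpha)$ taken over all minimizers $w$, rather than the super-differential evaluated at a single $w(\alpha)$. The uniqueness/differentiability hypothesis invoked in the final sentence of the lemma is precisely what rules out ties and guarantees that both the argmin and the super-differential degenerate to singletons, making the equality literal there.
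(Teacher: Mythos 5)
Your proposal is correct and follows essentially the same route as the paper's own proof: completing the square to identify $w(\alpha)=\mathrm{H}_k(-\frac{1}{N\lambda}\sum_i\alpha_i x_i)$ as the inner minimizer, substituting back to cancel the cross term, and then using the fact that $L(w(\alpha),\cdot)$ is a concave majorant of $D$ touching at $\alpha$ (the paper writes this as the chain $D(\alpha')\le L(w(\alpha''),\alpha')\le L(w(\alpha''),\alpha'')+\langle g(\alpha''),\alpha'-\alpha''\rangle$). Your closing remark that the stated identity only gives one inclusion of the super-differential at degenerate $\alpha$ with ties in the top-$k$ magnitudes is a fair and slightly more careful observation than the paper makes, and you correctly note that the uniqueness/differentiability hypothesis is what makes the equality literal.
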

\begin{proof}
A proof of this result is given in Appendix~\ref{append:proof_concave}.
\end{proof}
Based on Theorem~\ref{thrm:sparse_saddle_point} and Theorem~\ref{thrm:sparse_mini_max}, we are able to further establish a sparse strong duality theorem which gives the sufficient and necessary conditions under which the optimal values of the primal and dual problems coincide.
\begin{theorem}[Sparse Strong Duality Theorem]\label{thrm:sparse_strong_duality}
Let $\bar w \in \mathbb{R}^d$ is a $k$-sparse primal vector and $\bar \alpha \in \mathcal{F}^N$ be a dual vector. Then $\bar \alpha$ solves the dual problem in~\eqref{prob:dual}, i.e., $D(\bar\alpha) \ge D(\alpha), \ \forall \alpha \in \mathcal{F}^N$, and $P(\bar w) = D(\bar\alpha)$ if and only if the pair $(\bar w, \bar\alpha)$ satisfies the conditions~(a)$\sim$(c) in Theorem~\ref{thrm:sparse_saddle_point}.
\end{theorem}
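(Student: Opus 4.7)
The plan is to prove both directions by combining weak duality with the characterizations already obtained in Theorem~\ref{thrm:sparse_saddle_point} and Theorem~\ref{thrm:sparse_mini_max}. The central observation is that, because $P(w)=\max_{\alpha\in \mathcal{F}^N} L(w,\alpha)$ for any $k$-sparse $w$ and $D(\alpha)=\min_{\|w\|_0\le k} L(w,\alpha)$, weak duality holds in the form
\[
P(w)\ \ge\ L(w,\alpha)\ \ge\ D(\alpha)\qquad \text{for all $k$-sparse $w$ and $\alpha\in\mathcal{F}^N$,}
\]
so an equality $P(\bar w)=D(\bar\alpha)$ forces a squeeze that pins down the saddle-point structure. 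Thus the task reduces to translating ``dual optimality plus zero primal-dual gap'' into the three conditions of Theorem~\ref{thrm:sparse_saddle_point}, and conversely.

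For the sufficiency direction ($\Leftarrow$), I would assume conditions (a)--(c) hold. Theorem~\ref{thrm:sparse_saddle_point} then gives that $(\bar w,\bar\alpha)$ is a $k$-sparse saddle point of $L$. Taking the max over $\alpha$ on the left inequality of \eqref{inequat:sparse_saddle_point} and using $\max_{\alpha\in\mathcal{F}^N} L(\bar w,\alpha)=P(\bar w)$ yields $L(\bar w,\bar\alpha)=P(\bar w)$; taking the min over $k$-sparse $w$ on the right inequality and using the definition of $D$ yields $L(\bar w,\bar\alpha)=D(\bar\alpha)$. Hence $P(\bar w)=D(\bar\alpha)$. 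For dual optimality, for any $\alpha\in\mathcal{F}^N$,
\[
D(\alpha)\ \le\ L(\bar w,\alpha)\ \le\ L(\bar w,\bar\alpha)\ =\ D(\bar\alpha),
\]
where the first inequality uses the definition of $D$ and the second is the left half of the saddle-point inequality. This gives $\bar\alpha\in\arg\max D$.

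For the necessity direction ($\Rightarrow$), I would assume $\bar\alpha$ maximizes $D$ and $P(\bar w)=D(\bar\alpha)$, and derive (a)--(c). Weak duality gives $P(w^*)\ge D(\bar\alpha)=P(\bar w)$ for every $k$-sparse $w^*$, which is exactly condition (a). Next, the chain
\[
P(\bar w)\ =\ \max_{\alpha\in\mathcal{F}^N} L(\bar w,\alpha)\ \ge\ L(\bar w,\bar\alpha)\ \ge\ \min_{\|w\|_0\le k} L(w,\bar\alpha)\ =\ D(\bar\alpha)\ =\ P(\bar w)
\]
must be an equality throughout. Equality in the first step means $\bar\alpha\in\arg\max_\alpha L(\bar w,\alpha)$, i.e.\ $\bar\alpha_i\in\partial l_i(\bar w^\top x_i)$ for each $i$, which is condition (b). Equality in the second step means $\bar w\in\arg\min_{\|w\|_0\le k} L(w,\bar\alpha)$, and the explicit minimizer computation underlying Lemma~\ref{lemma:concave} identifies this with $\bar w=\mathrm{H}_k(-\frac{1}{\lambda N}\sum_i\bar\alpha_i x_i)$, which is condition (c).

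The main subtlety I expect is in the necessity direction, specifically at the step that deduces condition (c) from $\bar w\in\arg\min_{\|w\|_0\le k} L(w,\bar\alpha)$: since the $k$-sparse feasible set is non-convex, the minimizer need not be unique and one must verify that any such minimizer must equal $\mathrm{H}_k(-\frac{1}{\lambda N}\sum_i\bar\alpha_i x_i)$ (up to ties in the top-$k$ selection), which follows by writing $L(w,\bar\alpha)=\frac{\lambda}{2}\|w+\frac{1}{\lambda N}\sum_i\bar\alpha_i x_i\|^2+\text{const}(\bar\alpha)$ and noting that the best $k$-sparse approximation in the squared norm is exactly the hard-thresholding operator. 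The remaining steps are routine applications of weak duality and the saddle-point inequality, and once (a)--(c) are in hand, Theorem~\ref{thrm:sparse_saddle_point} closes the argument.
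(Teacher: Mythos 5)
Your proposal is correct and follows essentially the same route as the paper: both directions rest on the weak-duality squeeze $P(w)\ge L(w,\alpha)\ge D(\alpha)$ together with the saddle-point characterization of Theorem~\ref{thrm:sparse_saddle_point}, and your handling of the non-uniqueness of the $k$-sparse minimizer via the quadratic form of $L(\cdot,\bar\alpha)$ matches the paper's treatment in~\eqref{equat:L_quadratic}. The only cosmetic difference is that in the converse direction the paper delegates to the ``$\Leftarrow$'' argument of Theorem~\ref{thrm:sparse_mini_max} and Corollary~\ref{corol:sparse_mini_max}, whereas you extract conditions~(b) and~(c) directly from the forced equalities in the chain; the underlying mechanism is identical.
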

\begin{proof}
A proof of this result is given in Appendix~\ref{append:proof_sparse_strong_duality}.
\end{proof}
We define the sparse primal-dual gap $\epsilon_{PD}(w,\alpha):=P(w) - D(\alpha)$. The main message conveyed by Theorem~\ref{thrm:sparse_strong_duality} is that the sparse primal-dual gap reaches zero at the primal-dual pair $(\bar w, \bar \alpha)$ if and only if the conditions~(a)$\sim$(c) in Theorem~\ref{thrm:sparse_saddle_point} hold.

The sparse duality theory developed in this section suggests a natural way for finding the global minimum of the sparsity-constrained minimization problem in~\eqref{prob:general} via maximizing its dual problem in~\eqref{prob:dual}. Once the dual maximizer $\bar\alpha$ is estimated, the primal sparse minimizer $\bar w$ can then be recovered from it according to the prima-dual connection $\bar w = \mathrm{H}_k\left(-\frac{1}{\lambda N}\sum_{i=1}^N \bar\alpha_i x_i\right)$ as given in the condition~(c). Since the dual objective function $D(\alpha)$ is shown to be concave, its global maximum can be estimated using any convex/concave optimization method. In the next section, we present a simple projected super-gradient method to solve the dual maximization problem.

\section{Dual Iterative Hard Thresholding}
\label{sect:algorithm}
Generally, $D(\alpha)$ is a non-smooth function since: 1) the conjugate function $l^*_i$ of an arbitrary convex loss $l_i$ is generally non-smooth and 2) the term $\|w(\alpha)\|^2$ is non-smooth with respect to $\alpha$ due to the truncation operation involved in computing $w(\alpha)$. Therefore, smooth optimization methods are not directly applicable here and we resort to sub-gradient-type methods to solve the non-smooth dual maximization problem in~\eqref{prob:dual}.

\subsection{Algorithm}
The Dual Iterative Hard Thresholding (DIHT) algorithm, as outlined in Algorithm~\ref{alg:DIHT}, is essentially a projected super-gradient method for maximizing $D(\alpha)$. The procedure generates a sequence of prima-dual pairs $(w^{(0)},\alpha^{(0)}), (w^{(1)},\alpha^{(1)}), \ldots$ from an
initial pair $w^{(0)}=0$ and $\alpha^{(0)}=0$. At the $t$-th iteration, the dual update step \textbf{S1} conducts the projected super-gradient ascent in~\eqref{eqn:dualupdate1} to update $\alpha^{(t)}$ from $\alpha^{(t-1)}$ and $w^{(t-1)}$. Then in the primal update step \textbf{S2}, the primal variable $w^{(t)}$ is constructed from $\alpha^{(t)}$ using a $k$-sparse truncation operation in~\eqref{eqn:primalupdate}.

\begin{algorithm}[tb]\caption{Dual Iterative Hard Thresholding (DIHT)}
\label{alg:DIHT}
\SetKwInOut{Input}{Input}\SetKwInOut{Output}{Output}\SetKw{Initialization}{Initialization}
\Input{Training set $\{x_i,y_i\}_{i=1}^N$. Regularization strength parameter $\lambda$. Cardinality constraint $k$. Step-size $\eta$. }
\Initialization{$w^{(0)}=0$, $\alpha_1^{(0)}=...=\alpha_N^{(0)}=0$.} \\
\For{ $t=1,2,...,T$}{
(\textbf{S1}) Dual projected super-gradient ascent: $\forall~ i\in \{1,2,...,N\}$,
\begin{equation}\label{eqn:dualupdate1}
\alpha_i^{(t)} = \mathrm{P}_{\mathcal{F}}\left(\alpha_i^{(t-1)}+\eta^{(t-1)} g_i^{(t-1)}\right),
\end{equation}
where $g_i^{(t-1)}=\frac{1}{N}(x_i^\top w^{(t-1)}- l^{*'}_i(\alpha_i^{(t-1)}))$ is the super-gradient and $\mathrm{P}_{\mathcal{F}}(\cdot)$ is the Euclidian projection operator with respect to feasible set $\mathcal{F}$.

(\textbf{S2}) Primal hard thresholding:\\
\begin{equation}\label{eqn:primalupdate}
w^{(t)}= \mathrm{H}_k\left(-\frac{1}{\lambda N}\sum\limits_{i=1}^N \alpha_i^{(t)}x_i\right).
\end{equation}
}
\Output{$w^{(T)}$.}
\end{algorithm}

When a batch estimation of super-gradient $D'(\alpha)$ becomes expensive in large-scale applications, it is natural to consider the stochastic implementation of DIHT, namely SDIHT, as outlined in Algorithm~\ref{alg:SDIHT}. Different from the batch computation in Algorithm~\ref{alg:DIHT}, the dual update step \textbf{S1} in Algorithm~\ref{alg:SDIHT} randomly selects a block of samples (from a given block partition of samples) and update their corresponding dual variables according to~\eqref{eqn:stodualupdate}. Then in the primal update step \textbf{S2.1}, we incrementally update an intermediate accumulation vector $\tilde w^{(t)}$ which records $-\frac{1}{\lambda N}\sum_{i=1}^N \alpha_i^{(t)}x_i$ as a weighted sum of samples. In \textbf{S2.2}, the primal vector $w^{(t)}$ is updated by applying $k$-sparse truncation on $\tilde w^{(t)}$. The SDIHT is essentially a block-coordinate super-gradient method for the dual problem. Particularly, in the extreme case $m=1$, SDIHT reduces to the batch DIHT. At the opposite extreme end with $m=N$, i.e., each block contains one sample, SDIHT becomes a stochastic coordinate-wise super-gradient method.

The dual update~\eqref{eqn:stodualupdate} in SDIHT is much more efficient than DIHT as the former only needs to access a small subset of samples at a time. If the hard thresholding operation in primal update becomes a bottleneck, e.g., in high-dimensional settings, we suggest to use SDIHT with relatively smaller number of blocks so that the hard thresholding operation in \textbf{S2.2} can be less frequently called.

\begin{algorithm}[tb]\caption{Stochastic Dual Iterative Hard Thresholding (SDIHT)}
\label{alg:SDIHT}
\SetKwInOut{Input}{Input}\SetKwInOut{Output}{Output}\SetKw{Initialization}{Initialization}
\Input{Training set $\{x_i,y_i\}_{i=1}^N$. Regularization strength parameter $\lambda$. Cardinality constraint $k$. Step-size $\eta$. A block disjoint partition $\{B_1,...,B_m\}$ of the sample index set $[1,...,N]$.}
\Initialization{$w^{(0)}=\tilde w^{(0)} = 0$, $\alpha_1^{(0)}=...=\alpha_N^{(0)}=0$.} \\
\For{ $t=1,2,...,T$}{
(S1) Dual projected super-gradient ascent: Uniformly randomly select an index block $B^{(t)}_i \in \{B_1,...,B_m\}$. For all $j\in B^{(t)}_i$ update $\alpha^{(t)}_j$ as
\begin{equation}\label{eqn:stodualupdate}
\alpha_j^{(t)} = \mathrm{P}_{\mathcal{F}}\left(\alpha_j^{(t-1)}+\eta^{(t-1)} g_j^{(t-1)}\right).
\end{equation}
Set $\alpha_j^{(t)}=\alpha_j^{(t-1)}$, $\forall j \notin B^{(t)}_i$.

(S2) Primal hard thresholding:

\ \ \ -- (S2.1) Intermediate update:
 \begin{equation}\label{eqn:intprimalupdate}
\tilde{w}^{(t)}=\tilde w^{(t-1)}-\frac{1}{\lambda N}\sum_{j\in B^{(t)}_i}(\alpha_j^{(t)}-\alpha_j^{(t-1)})x_j.
\end{equation}

\ \ \ -- (S2.2) Hard thresholding: $w^{(t)}= \mathrm{H}_k(\tilde{w}^{(t)})$.
}
\Output{$w^{(T)}$.}
\end{algorithm}

\subsection{Convergence analysis}

We now analyze the non-asymptotic convergence behavior of DIHT and SDIHT. In the following analysis, we will denote $\bar\alpha = \argmax_{\alpha \in \mathcal{F}^N} D(\alpha)$ and use the abbreviation $\epsilon_{PD}^{(t)}:= \epsilon_{PD}(w^{(t)},\alpha^{(t)})$. Let $r=\max_{a\in \mathcal{F}}|a|$ be the bound of the dual feasible set $\mathcal{F}$ and $\rho=\max_{i, a\in \mathcal{F}}| l_i^{*'}(a)|$. For example, such quantities exist when $l_i$ and $l^*_i$ are Lipschitz continuous~\citep{shalev2013stochastic}. We also assume without loss of generality that $\|x_i\|\le 1$. Let $X=[x_1,...,x_N]\in \mathbb{R}^{d\times N}$ be the data matrix. Given an index set $F$, we denote $X_F$ as the restriction of $X$ with \emph{rows} restricted to $F$. The following quantities will be used in our analysis:
\[
\begin{aligned}
\sigma_{\max}^2 (X,s) &= \sup_{u\in \mathbb{R}^n, F} \left\{ u^\top X_F^\top X_F u \mid |F| \le s, \|u\|=1 \right\}, \\
\sigma_{\min}^2(X,s)  &= \inf_{u\in \mathbb{R}^n, F} \left\{ u^\top X_F^\top X_F u \mid |F| \le s, \|u\| = 1 \right\}.
\end{aligned}
\]
Particularly, $\sigma_{\max}(X,d) = \sigma_{\max}(X)$ and $\sigma_{\min}(X,d) = \sigma_{\min}(X)$. We say a univariate differentiable function $f(x)$ is $\gamma$-smooth if $\forall x, y$, $f(y) \le f(x) + \langle f'(x), y - x \rangle + \frac{\gamma}{2} |x - y|^2$. The following is our main theorem on the dual parameter estimation error, support recovery and primal-dual gap of DIHT.
\begin{theorem}\label{thrm:DIHT_Conv}
Assume that $l_i$ is $1/\mu$-smooth. Set $\eta^{(t)} = \frac{\lambda N^2}{(\lambda N \mu + \sigma_{\min}(X,k))(t+1)}$. Define constants $c_1 = \frac{N^3 (r+\lambda\rho)^2}{(\lambda N \mu + \sigma_{\min}(X,k))^2}$ and $c_2=(r+\lambda\rho)^2\left(1+\frac{\sigma_{\max}(X,k)}{\mu\lambda N}\right)^2$.
\begin{itemize}
  \item[(a)] \emph{\textbf{Parameter estimation error}:} The sequence $\{\alpha^{(t)}\}_{t\ge 1}$ generated by Algorithm~\ref{alg:DIHT} satisfies the following estimation error inequality:
\[
\|\alpha^{(t)} - \bar\alpha\|^2  \le c_1\left(\frac{1}{t}  + \frac{\ln t}{t}\right),
\]
\item[(b)]\emph{\textbf{Support recovery and primal-dual gap}:} Assume additionally that $\bar\epsilon: = \bar w_{\min} - \frac{1}{\lambda} \|P'(\bar w)\|_\infty >0$. Then, $\supp(w^{(t)})= \supp(\bar w)$ when
  \[
t \ge t_0= \left\lceil\frac{12c_1\sigma_{\max}^2(X)}{\lambda^2 N^2 \bar\epsilon^2} \ln \frac{12c_1\sigma_{\max}^2(X)}{\lambda^2 N^2 \bar\epsilon^2}\right\rceil.
\]
Moreover, for any $\epsilon>0$, the primal-dual gap satisfies $\epsilon_{PD}^{(t)} \le \epsilon$ when $t \ge \max\{t_0, t_1\}$ where $t_1=\left\lceil\frac{3c_1c_2}{\lambda^2 N\epsilon^2}\ln\frac{3c_1c_2}{\lambda^2 N \epsilon^2}\right\rceil$.
\end{itemize}
\end{theorem}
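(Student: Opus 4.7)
The plan is a projected super-gradient analysis on the non-smooth concave dual $D$, combined with a support-recovery argument that converts dual estimation error into a primal iterate bound through the linear map $\alpha\mapsto -\tfrac{1}{\lambda N}X\alpha$.

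For part~(a), I would first argue that $D$ is $\kappa$-strongly concave on $\mathcal{F}^N$ with $\kappa=(\lambda N\mu+\sigma_{\min}(X,k))/(\lambda N^2)$. The $\mu/N$ contribution is immediate from duality: since each $l_i$ is $1/\mu$-smooth, its conjugate $l_i^*$ is $\mu$-strongly convex, hence $-\tfrac{1}{N}\sum_i l_i^*(\alpha_i)$ is $\mu/N$-strongly concave. The remaining piece comes from the hard-thresholded quadratic term: writing $\|w(\alpha)\|^2=(\lambda N)^{-2}\max_{|F|=k}\|X_F\alpha\|^2$ exposes it as a maximum of convex quadratics, and for any pair $\alpha,\alpha'$ I can control $\|w(\alpha')\|^2-\|w(\alpha)\|^2$ from above by the quadratic active at $\alpha$, i.e.\ $\|X_F\alpha'\|^2-\|X_F\alpha\|^2$ with $F=\supp(w(\alpha))$, picking up the $\sigma_{\min}(X_F)$ floor on that quadratic. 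Given this strong concavity, I follow the standard sub-gradient argument: the projected update together with non-expansivity of $\mathrm{P}_\mathcal{F}$ and the strong-concavity inequality
\[
D(\bar\alpha)\le D(\alpha^{(t-1)})+\langle g^{(t-1)},\bar\alpha-\alpha^{(t-1)}\rangle-\tfrac{\kappa}{2}\|\alpha^{(t-1)}-\bar\alpha\|^2
\]
yield the recursion $\|\alpha^{(t)}-\bar\alpha\|^2\le(1-\eta^{(t-1)}\kappa)\|\alpha^{(t-1)}-\bar\alpha\|^2+(\eta^{(t-1)})^2 G^2$, where $G^2$ is a uniform bound on $\|g^{(t-1)}\|^2$ obtained from $\|x_i\|\le 1$, $|\alpha_i|\le r$, $|l_i^{*'}|\le\rho$ and $\|w(\alpha)\|\le r\sigma_{\max}(X,k)/(\lambda\sqrt N)$. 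Substituting $\eta^{(t-1)}=1/(\kappa t)$ and telescoping delivers the $(1/t+\ln t/t)$ rate with the stated constant $c_1$.

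For part~(b), the key observation is that Theorem~\ref{thrm:sparse_saddle_point}(c) together with the remark that follows implies $\bar w=\mathrm{H}_k(-\tfrac{1}{\lambda N}X\bar\alpha)$ and, equivalent to $\bar\epsilon>0$, that the $k$-th largest entry of $-\tfrac{1}{\lambda N}X\bar\alpha$ exceeds the $(k+1)$-th by $\bar\epsilon$ in magnitude. Since $\|-\tfrac{1}{\lambda N}X(\alpha^{(t)}-\bar\alpha)\|_\infty\le\tfrac{\sigma_{\max}(X)}{\lambda N}\|\alpha^{(t)}-\bar\alpha\|$, whenever this quantity drops below $\bar\epsilon/2$ the top-$k$ entries are preserved and $\supp(w^{(t)})=\supp(\bar w)$. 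Plugging the bound of part (a) into $\tfrac{\sigma_{\max}(X)}{\lambda N}\|\alpha^{(t)}-\bar\alpha\|<\bar\epsilon/2$ and solving for $t$ via the standard ``$t\ge A\ln A$ implies $t/\ln t\gtrsim A$'' trick produces the threshold $t_0$.

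Finally I decompose $\epsilon_{PD}^{(t)}=(P(w^{(t)})-P(\bar w))+(D(\bar\alpha)-D(\alpha^{(t)}))$, where the cross term $P(\bar w)-D(\bar\alpha)$ vanishes by the strong duality of Theorem~\ref{thrm:sparse_strong_duality}. The second summand is bounded by $G\|\alpha^{(t)}-\bar\alpha\|$ via concavity and the super-gradient bound. For the first, once $t\ge t_0$ both $w^{(t)}$ and $\bar w$ are supported on the common $\bar F$ with $\|w^{(t)}-\bar w\|\le\sigma_{\max}(X,k)/(\lambda N)\cdot\|\alpha^{(t)}-\bar\alpha\|$, and the KKT-style condition $\mathrm{H}_{\bar F}(P'(\bar w))=0$ kills the linear term in the smoothness expansion of $P$, leaving $P(w^{(t)})-P(\bar w)\le\tfrac{L}{2}\|w^{(t)}-\bar w\|^2$ for an appropriate smoothness constant $L$ of $P$ on $\bar F$. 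Plugging the part-(a) bound in and observing that the linear-in-$\|\alpha^{(t)}-\bar\alpha\|$ term dominates as $\epsilon\to 0$ gives the threshold $t_1$. The main obstacle I anticipate is the first step above: rigorously establishing the announced strong concavity modulus for $D$, since the piecewise-quadratic term $-\tfrac{\lambda}{2}\|w(\alpha)\|^2$ switches its active support at breakpoints, and one must verify that the smooth-piece inequalities patch together cleanly when $\alpha$ and $\alpha'$ activate different top-$k$ supports.
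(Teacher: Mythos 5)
Your part~(a) and the support-recovery half of part~(b) follow essentially the paper's own route: the strong-concavity-type inequality for $D$ with modulus $(\lambda N\mu+\sigma_{\min}^2(X,k))/(\lambda N^2)$, obtained by replacing the top-$k$ support at $\alpha'$ with the support active at the reference point (this is precisely the paper's Lemma~\ref{lemma:strong_concavity}); the standard projected super-gradient recursion with $\eta^{(t)}\propto 1/t$ telescoped to a $(1/t+\ln t/t)$ rate; and the perturbation argument showing that the top-$k$ set of $-\frac{1}{\lambda N}X\alpha^{(t)}$ is preserved once $\frac{\sigma_{\max}(X)}{\lambda N}\|\alpha^{(t)}-\bar\alpha\|\le\bar\epsilon/2$ (Lemma~\ref{lemma:uniqueness_gap}). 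The obstacle you flag at the end --- patching the strong-concavity inequality across support switches --- is already dissolved by your own reduction: since $\|w(\alpha')\|^2$ is a maximum over supports, lower-bounding it by the single quadratic on $F=\supp(w(\alpha))$ gives an upper bound on $D(\alpha')$ involving only that one fixed quadratic, so no cross-breakpoint patching is ever needed. Where you genuinely diverge is the primal-dual gap. The paper proves $P(w)-D(\alpha)\le\langle D'(\alpha),\beta-\alpha\rangle$ with $\beta_i=l_i'(w^\top x_i)$ via Fenchel--Young (Lemma~\ref{lemma:gap_bound}), then bounds $\|\beta^{(t)}-\bar\alpha\|\le\frac{\sigma_{\max}(X,k)}{\mu\lambda N}\|\alpha^{(t)}-\bar\alpha\|$ using smoothness of $l_i$; this is what produces the exact constant $c_2$ and hence $t_1$. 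You instead split the gap as $(P(w^{(t)})-P(\bar w))+(D(\bar\alpha)-D(\alpha^{(t)}))$ using strong duality, bound the dual part linearly by concavity, and bound the primal part quadratically via restricted smoothness of $P$ on $\bar F$ together with $\mathrm{H}_{\bar F}(P'(\bar w))=0$. This is also valid --- both routes require $t\ge t_0$ so that $\supp(w^{(t)})=\bar F$ and $\|w^{(t)}-\bar w\|\le\frac{\sigma_{\max}(X,k)}{\lambda N}\|\alpha^{(t)}-\bar\alpha\|$ --- and it gives a cleaner separation of primal and dual suboptimality; the price is that your constants will not reproduce the stated $c_2$ and $t_1$ verbatim, and you need one explicit line to absorb the quadratic term into the linear one (for instance using the a priori bound $\|\alpha^{(t)}-\bar\alpha\|\le\frac{\lambda N\bar\epsilon}{2\sigma_{\max}(X)}$ available for $t\ge t_0$) rather than the informal appeal to ``the linear term dominates as $\epsilon\to 0$.''
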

\begin{proof}
A proof of this result is given in Appendix~\ref{append:proof_DIHT_conv}.
\end{proof}
\begin{remark}
The theorem allows $\mu = 0$ when $\sigma_{\min}(X,k)>0$. If $\mu >0$, then $\sigma_{\min}(X,k)$ is allowed to be zero and thus the step-size can be set as $\eta^{(t)}=\frac{N}{\mu(t+1)}$.
\end{remark}
Consider primal sub-optimality $\epsilon_P^{(t)}:=P(w^{(t)}) - P(\bar w)$. Since $\epsilon_P^{(t)} \le \epsilon_{PD}^{(t)}$ always holds, the convergence rates in Theorem~\ref{thrm:DIHT_Conv} are applicable to the primal sub-optimality as well. An interesting observation is that these convergence results on $\epsilon_P^{(t)}$ are not relying on the Restricted Isometry Property (RIP) (or restricted strong condition number) which is required in most existing analysis of IHT-style algorithms~\citep{blumensath2009iterative,Yuan-ICML-2014}. In \citep{jain2014iterative}, several relaxed variants of IHT-style algorithms are presented for which the estimation consistency can be established without requiring the RIP conditions. In contrast to the RIP-free sparse recovery analysis in~\citep{jain2014iterative}, our Theorem~\ref{thrm:DIHT_Conv} does not require the sparsity level $k$ to be relaxed.

For SDIHT, we can establish similar non-asymptotic convergence results as summarized in the following theorem.
\begin{theorem}\label{thrm:SDIHT_Conv}
Assume that $l_i$ is $1/\mu$-smooth. Set $\eta^{(t)} = \frac{\lambda m N^2}{(\lambda N \mu + \sigma_{\min}(X,k))(t+1)}$.
\begin{itemize}
  \item[(a)] \emph{\textbf{Parameter estimation error}:} The sequence $\{\alpha^{(t)}\}_{t\ge 1}$ generated by Algorithm~\ref{alg:SDIHT} satisfies the following expected estimation error inequality:
\[
\mathbb{E}[\|\alpha^{(t)} - \bar\alpha\|^2] \le m c_1\left(\frac{1}{t}  + \frac{\ln t}{t}\right),
\]
\item[(b)] \emph{\textbf{Support recovery and primal-dual gap}:} Assume additionally that $\bar\epsilon: = \bar w_{\min} - \frac{1}{\lambda} \|P'(\bar w)\|_\infty >0$. Then, for any $\delta \in (0,1)$, with probability at least $1-\delta$, it holds that $\supp(w^{(t)})= \supp(\bar w)$ when
  \[
t \ge t_2= \left\lceil\frac{12mc_1\sigma_{\max}^2(X)}{\lambda^2\delta^2 N^2 \bar\epsilon^2} \ln \frac{12mc_1\sigma_{\max}^2(X)}{\lambda^2\delta^2 N^2 \bar\epsilon^2}\right\rceil.
\]
Moreover, with probability at least $1-\delta$, the primal-dual gap satisfies $\epsilon_{PD}^{(t)} \le \epsilon$ when $t \ge \max\{4t_2, t_3\}$ where $ t_3= \left\lceil\frac{12mc_1c_2}{\lambda^2\delta^2 N\epsilon^2} \ln \frac{12mc_1c_2}{\lambda^2\delta^2 N \epsilon^2}\right\rceil$.
\end{itemize}
\end{theorem}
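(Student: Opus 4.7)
My plan is to follow the blueprint of the proof of Theorem~\ref{thrm:DIHT_Conv} while tracking the effect of the block-coordinate randomization in step (\textbf{S1}) of Algorithm~\ref{alg:SDIHT}. Let $\mathcal{E}_{t-1}$ denote the $\sigma$-algebra generated by the random choices $B^{(1)}_i,\ldots,B^{(t-1)}_i$. Since $B^{(t)}_i$ is drawn uniformly from the partition $\{B_1,\ldots,B_m\}$, each coordinate $j$ has probability exactly $1/m$ of being updated at iteration $t$, so in expectation SDIHT executes a full super-gradient step of length $\eta^{(t-1)}/m$; this is precisely why the prescribed step-size is inflated by the factor $m$ relative to DIHT.

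For part (a), I would first recall from the DIHT analysis that the $1/\mu$-smoothness of $l_i$ together with the $\tfrac{\lambda}{2}\|w(\alpha)\|^2$ term yields the strong-concavity-type inequality
\[
\langle D'(\alpha^{(t-1)}),\alpha^{(t-1)}-\bar\alpha\rangle \le -\frac{\lambda N\mu+\sigma_{\min}(X,k)}{\lambda N^2}\|\alpha^{(t-1)}-\bar\alpha\|^2,
\]
together with a deterministic uniform bound $\|g^{(t-1)}\|^2 \le (r+\lambda\rho)^2/N$ on the super-gradient. I would then decompose
\[
\|\alpha^{(t)}-\bar\alpha\|^2 = \sum_{j\notin B^{(t)}_i}(\alpha_j^{(t-1)}-\bar\alpha_j)^2 + \sum_{j\in B^{(t)}_i}(\alpha_j^{(t)}-\bar\alpha_j)^2,
\]
bound the terms in the second sum by the projection nonexpansion $(\alpha_j^{(t-1)}+\eta^{(t-1)}g_j^{(t-1)}-\bar\alpha_j)^2$, and take conditional expectation over $B^{(t)}_i$. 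Using $\mathbb{P}(j\in B^{(t)}_i)=1/m$ this produces the recursion
\[
\mathbb{E}\bigl[\|\alpha^{(t)}-\bar\alpha\|^2\mid\mathcal{E}_{t-1}\bigr] \le \Bigl(1 - \tfrac{2\eta^{(t-1)}}{m}\cdot\tfrac{\lambda N\mu+\sigma_{\min}(X,k)}{\lambda N^2}\Bigr)\|\alpha^{(t-1)}-\bar\alpha\|^2 + \tfrac{(\eta^{(t-1)})^2}{m}\cdot\tfrac{(r+\lambda\rho)^2}{N}.
\]
Plugging in $\eta^{(t-1)}=\lambda m N^2/((\lambda N\mu+\sigma_{\min}(X,k))t)$ collapses the contraction factor to $1-2/t$ and the noise term to $m c_1/t^2$, so a standard induction on $t$ yields the claimed $m c_1(\tfrac{1}{t}+\tfrac{\ln t}{t})$ bound, with the extra factor of $m$ being exactly the price of randomized coordinate updates.

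For part (b), I would first convert the expected bound into a high-probability bound via Markov's inequality: with probability at least $1-\delta$,
\[
\|\alpha^{(t)}-\bar\alpha\|^2 \le \tfrac{m c_1}{\delta}\bigl(\tfrac{1}{t}+\tfrac{\ln t}{t}\bigr).
\]
From here the support-recovery argument is identical to that in the DIHT case: using the remark after Theorem~\ref{thrm:sparse_saddle_point} to characterize $\supp(\bar w)$ through $\bar\epsilon$, and translating the $\ell_2$-error in $\alpha$ into an $\ell_\infty$-error in $-\tfrac{1}{\lambda N}X\alpha$ via $\|X(\alpha^{(t)}-\bar\alpha)\|_\infty\le\sigma_{\max}(X)\|\alpha^{(t)}-\bar\alpha\|$, one obtains the threshold $t_2$ by requiring this $\ell_\infty$-error to fall below $\bar\epsilon/2$. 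For the primal-dual gap, once $t\ge t_2$ the iterate $w^{(t)}$ already has the correct support, so $P(w^{(t)})-D(\alpha^{(t)})$ can be controlled by the smoothness of $D$ (and of $P$) on this common support, giving an upper bound proportional to $\sqrt{c_2}\,\|\alpha^{(t)}-\bar\alpha\|$; combining with the high-probability estimate and solving $\epsilon_{PD}^{(t)}\le\epsilon$ yields the threshold $t_3$. Requiring $t\ge\max\{4t_2,t_3\}$, where the factor $4$ leaves enough slack to take a union bound over the two high-probability events, completes the argument.

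The main obstacle will be cleanly handling the randomized recursion in part (a): the super-gradient $g^{(t-1)}$ depends on $w^{(t-1)}$, which is itself a hard-thresholded nonlinear function of the entire random history, so a deterministic uniform bound on $\|g^{(t-1)}\|^2$ is essential before the conditional expectation over $B^{(t)}_i$ can be taken. I anticipate deriving this uniform bound from $|g_i^{(t-1)}|\le(|x_i^\top w^{(t-1)}|+|l_i^{*\prime}(\alpha_i^{(t-1)})|)/N$ together with a size bound on $\|w^{(t-1)}\|$ inherited from $\alpha^{(t-1)}\in\mathcal{F}^N$, after which the inductive argument on the recursion becomes entirely deterministic.
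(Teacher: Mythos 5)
Your proposal follows essentially the same route as the paper's proof: part (a) is the block-randomized recursion obtained from projection non-expansiveness plus a conditional expectation over the uniform block choice (each coordinate updated with probability $1/m$, which is exactly how the paper averages $v_{B_i}^{(t-1)}$ and $\|g_{B_i}^{(t-1)}\|^2$ over the $m$ blocks), and part (b) is Markov's inequality combined with the DIHT support-recovery and gap lemmas, with the factor $4$ in $4t_2$ absorbing the boost to confidence $1-\delta/2$ for the union bound. The only slips are cosmetic and do not affect the claimed bounds: the uniform super-gradient bound should be $\|g^{(t-1)}\|^2 \le (r+\lambda\rho)^2/(\lambda^2 N)$ rather than $(r+\lambda\rho)^2/N$, and your strong-concavity inequality omits the factor $\tfrac{1}{2}$ appearing in Lemma~\ref{lemma:strong_concavity} (so the paper's contraction factor is $1-1/t$, not $1-2/t$; your stronger version is still derivable from supergradient monotonicity and only improves the constant).
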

\begin{proof}
A proof of this result is given in Appendix~\ref{append:proof_SDIHT_conv}.
\end{proof}
\begin{remark}
Theorem~\ref{thrm:SDIHT_Conv} shows that, up to scaling factors, the expected or high probability iteration complexity of SDIHT is almost identical to that of DIHT. The scaling factor $m$ appeared in $t_2$ and $t_3$ reflects a trade-off between the decreased per-iteration cost and the increased iteration complexity.
\end{remark}

\section{Experiments}

This section dedicates in demonstrating the accuracy and efficiency of the proposed algorithms. We first show the model estimation performance of DIHT when applied to sparse ridge regression models on synthetic datasets. Then we evaluate the efficiency of DIHT/SDIHT on sparse $\ell_2$-regularized Huber loss and Hinge loss minimization tasks using real-world datasets.
\subsection{Model parameter estimation accuracy evaluation}
\label{sect:experiment}
\begin{figure}
\centering
\subfigure[Model estimation error]{
\includegraphics[width=0.22\textwidth,height=0.21\textwidth]{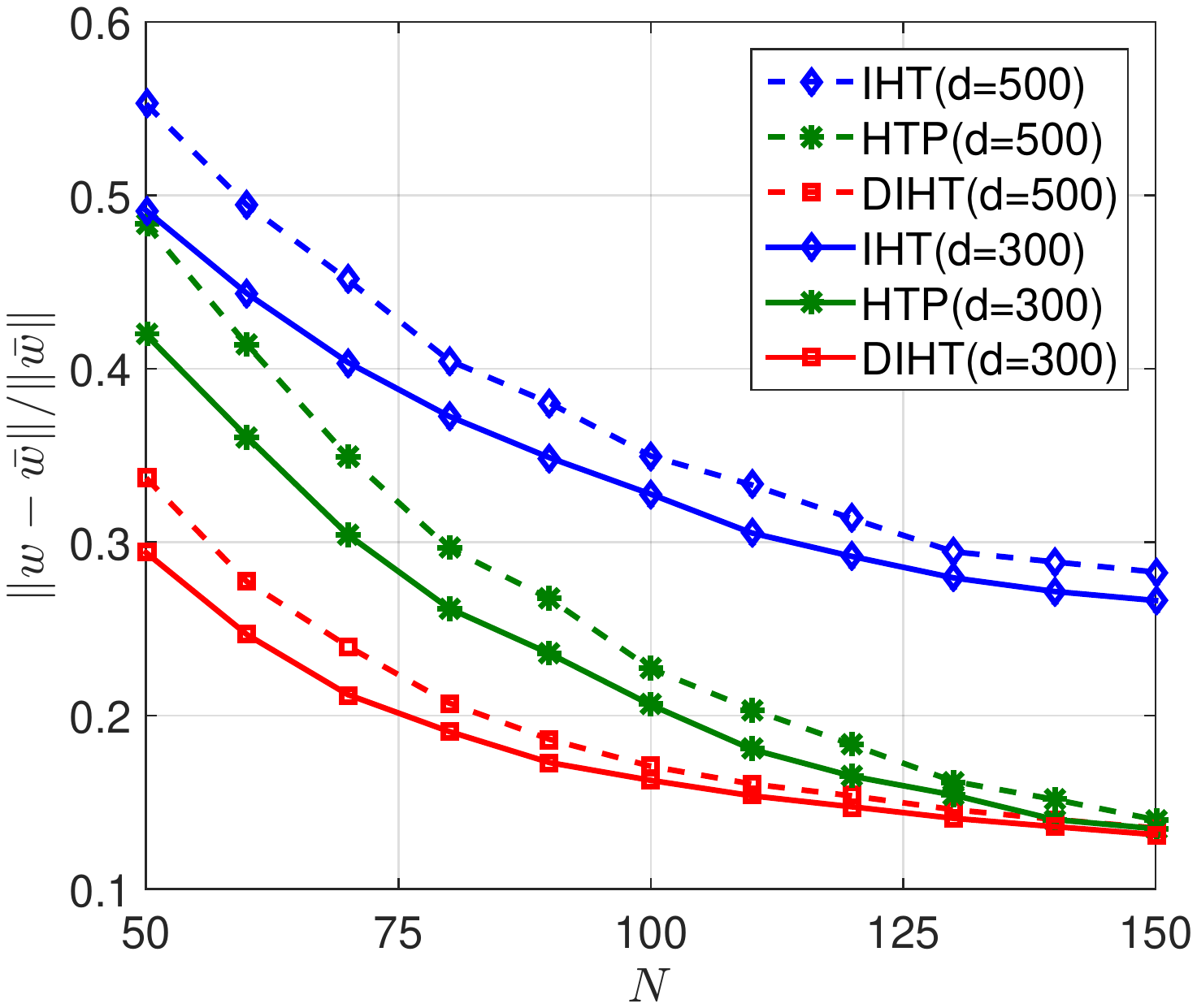}
\label{fig:modelesterror}
}
\subfigure[Percentage of support recovery success]{
\includegraphics[width=0.22\textwidth,height=0.21\textwidth]{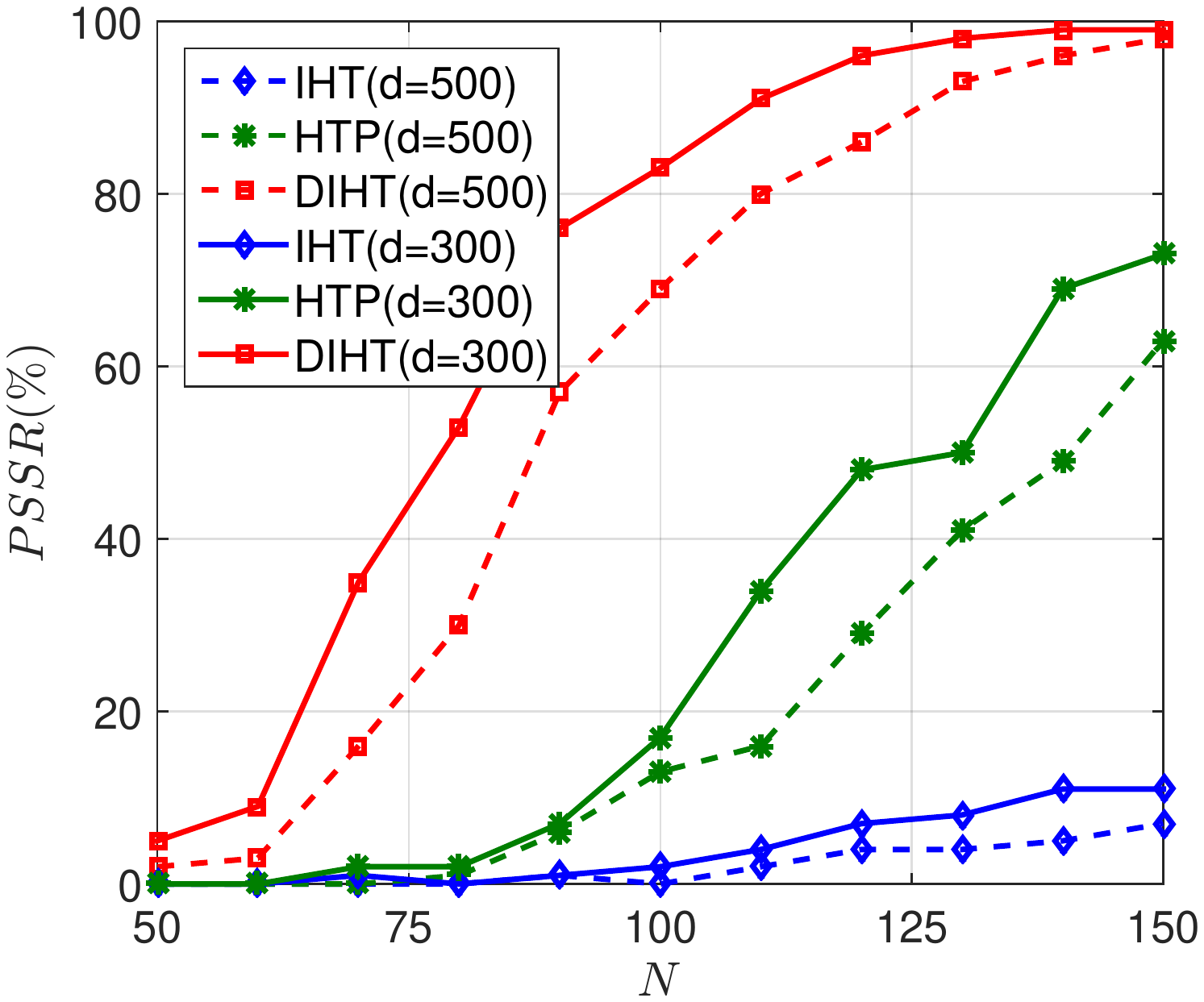}
\label{fig:precsuccrecov}
}
\caption{Model parameter estimation performance comparison between DIHT and baseline algorithms on the two synthetic dataset settings. The varying number of training sample is denoted by $N$.}
\label{fig:supportrecovery}
\end{figure}
A synthetic model is generated with sparse model parameter $\bar{w}=[\underbrace{1,1,\cdots,1}_{\bar{k}},\underbrace{0,0,\cdots,0}_{d-\bar{k}}]$. Each $x_i\in \mathbb{R}^{d}$ of the $N$ training data examples $\{x_i\}_{i=1}^N$ is designed to have two components. The first component is the top-$\bar{k}$ feature dimensions drawn from multivariate Gaussian distribution $N(\mu_1,\Sigma)$. Each entry in $\mu_1\in \mathbb{R}^{\bar{k}}$ independently follows standard normal distribution. The entries of covariance $\Sigma_{ij}=\left\{\begin{array}{ll} 1 & i=j\\
0.25 & i\ne j
\end{array}\right..$
The second component consists the left $d-\bar{k}$ feature dimensions. It follows $N(\mu_2, I)$ where each entry in $\mu_2\in \mathbb{R}^{d-\bar{k}}$ is drawn from standard normal distribution. We simulate two data parameter settings: (1) $d=500,\bar{k}=100$; (2) $d=300, \bar{k}=100$. In each data parameter setting 150 random data copies are produced independently. The task is to solve the following $\ell_2$-regularized sparse linear regression problem:
\vspace{-5.5mm}
\[
\min_{\|w\|\le k} \frac{1}{N}\sum\limits_{i=1}^Nl_{sq}(y_i,w^{\top}x_i)+\frac{\lambda}{2}\|w\|^2,
\]
where $l_{sq}(y_i,w^{\top}x_i)=(y_i-w^{\top}x_i)^2$. The responses $\{y_i\}_{i=1}^N$ are produced by $y_i=\bar{w}^{\top}x_i+\varepsilon_i$, where $\varepsilon_i\sim N(0,1)$. The convex conjugate of $l_{sq}(y_i,w^{\top}x_i)$ is known as $l_{sq}^*(\alpha_i)=\frac{\alpha_i^2}{4}+y_i\alpha_i$~\cite{shalev2013stochastic}. We consider solving the problem under the sparsity level $k=\bar{k}$. Two measurements are calculated for evaluation. The first is \emph{parameter estimation error} $\|w-\bar{w}\|/\|\bar{w}\|$. Apart from it we calculate the \emph{percentage of successful support recovery} ($PSSR$) as the second performance metric. A successful support recovery is obtained if $supp(\bar{w})= supp(w)$. The evaluation is conducted on the generated batch data copies to calculate the percentage of successful support recovery. We use 50 data copies as validation set to select the parameter $\lambda$ from $\{10^{-6},...,10^2\}$ and the percentage of successful support recovery is evaluated on the other 100 data copies.

Iterative hard thresholding (IHT)~\cite{blumensath2009iterative} and hard thresholding pursuit (HTP)~\cite{foucart2011hard} are used as the baseline primal algorithms. The parameter estimation error and percentage of successful support recovery curves under varying training size are illustrated in Figure~\ref{fig:supportrecovery}. We can observe from this group of curves that DIHT consistently achieves lower parameter estimation error and higher rate of successful support recovery than IHT and HTP. It is noteworthy that most significant performance gap between DIHT and the baselines occurs when the training size $N$ is comparable to or slightly smaller than the sparsity level $\bar{k}$.

\begin{figure*}
\centering
\subfigure[RCV1, $\lambda=0.002$]{
\includegraphics[width=0.21\textwidth,height=0.2\textwidth]{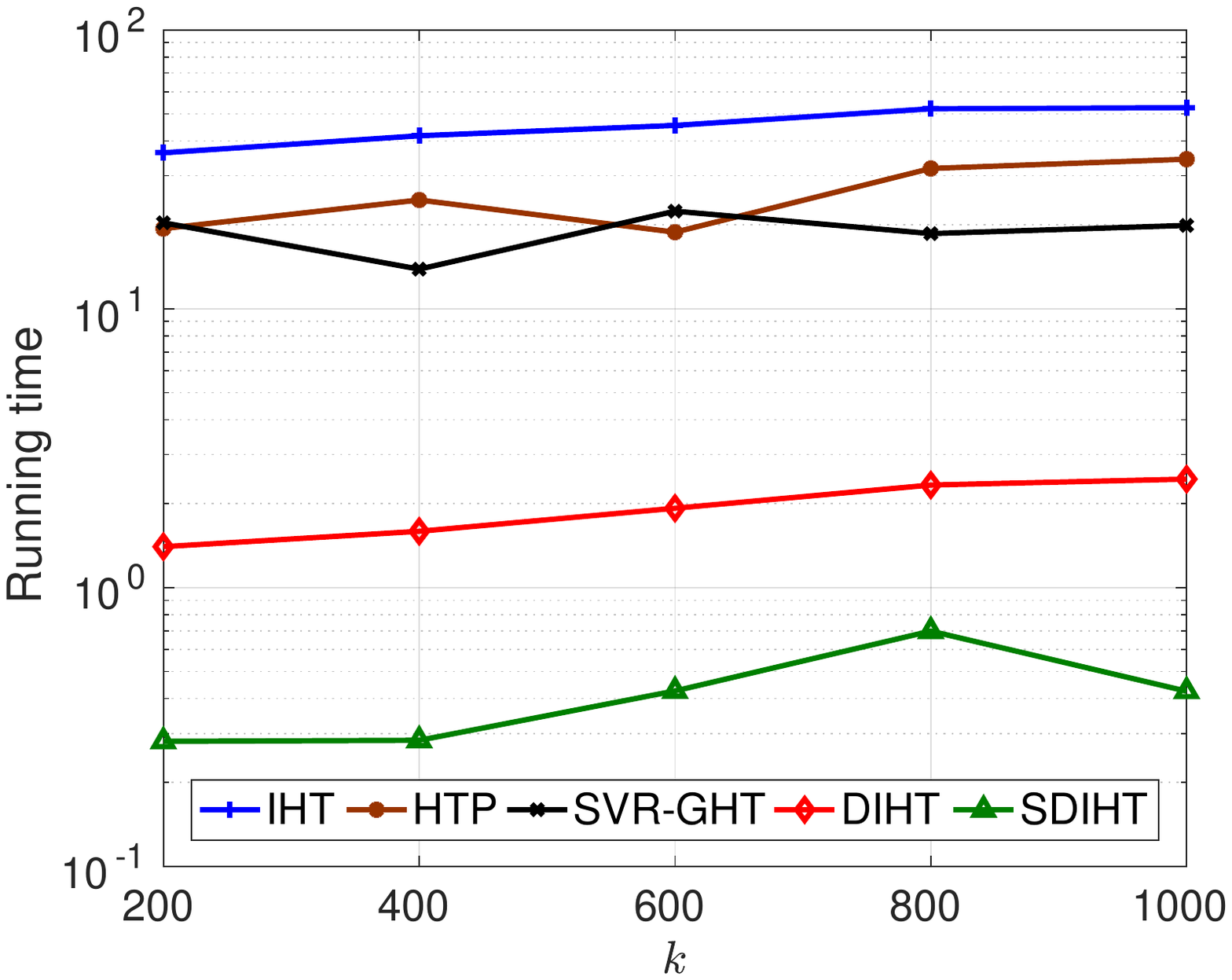}
\label{fig:TimeDIHTRCVSVM}
}
\subfigure[RCV1, $\lambda=0.0002$]{
\includegraphics[width=0.21\textwidth,height=0.2\textwidth]{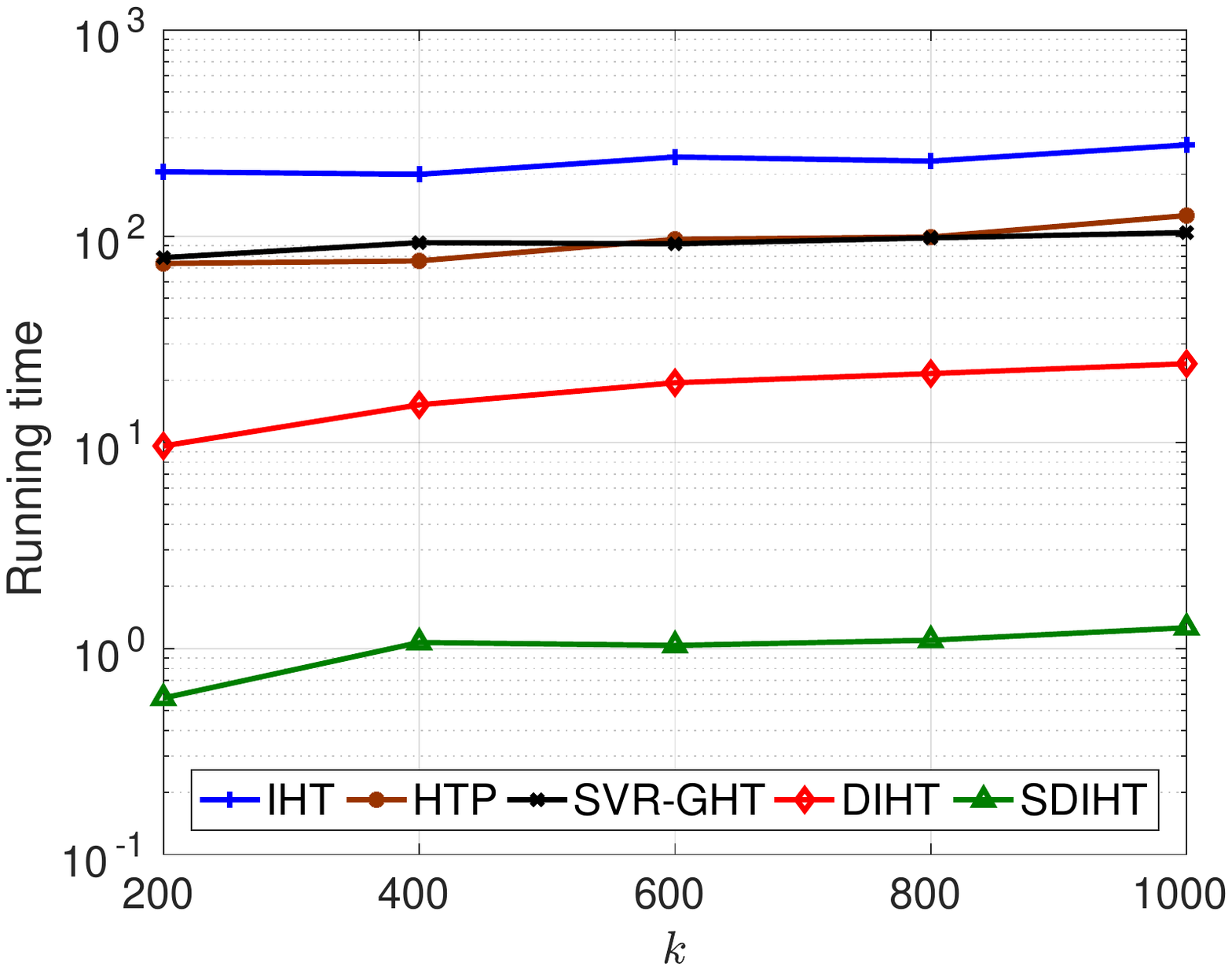}
\label{fig:TimeSDIHTRCVSVM}
}
\subfigure[News20, $\lambda=0.002$]{
\includegraphics[width=0.21\textwidth,height=0.2\textwidth]{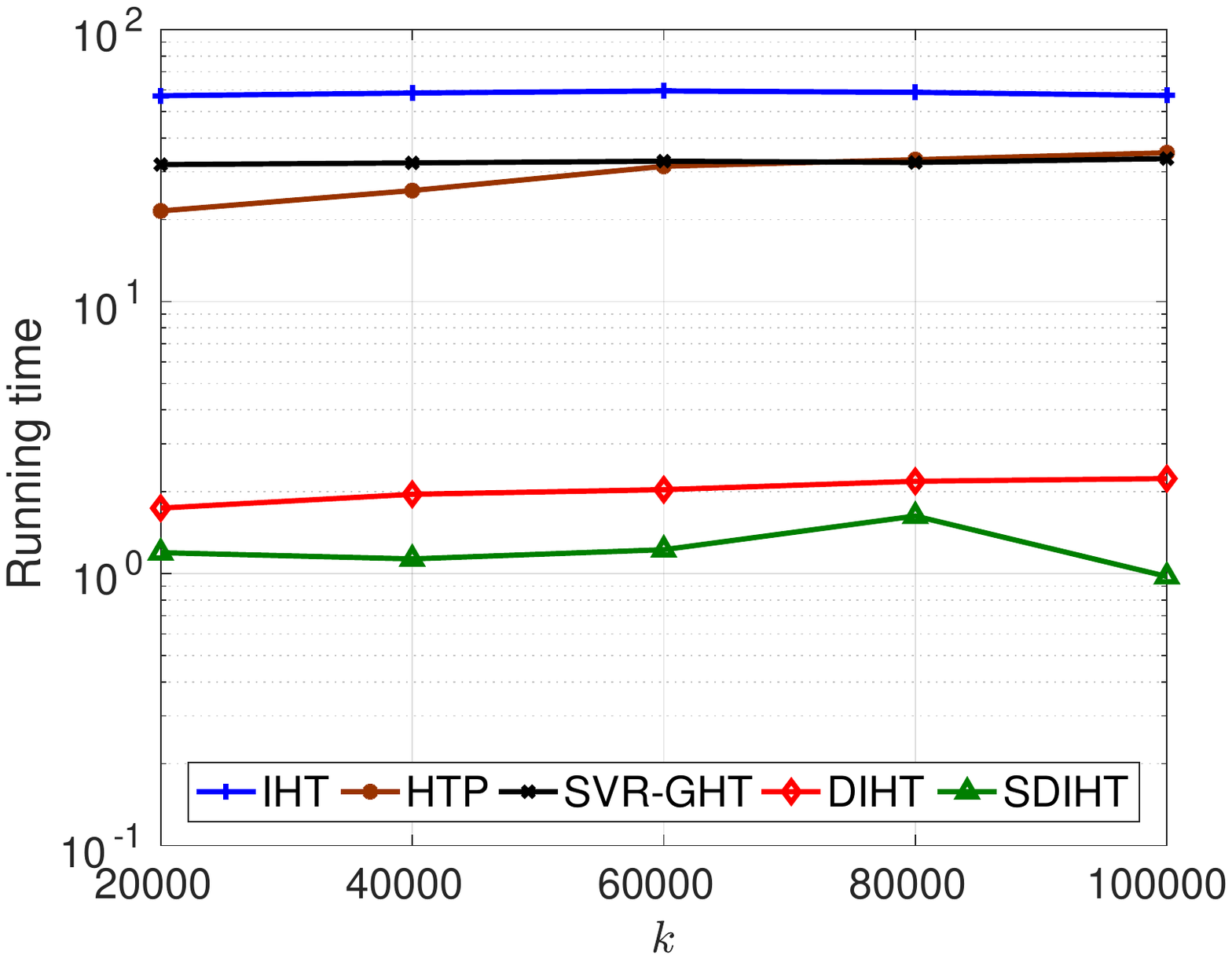}
\label{fig:TimeDIHTNewSVM}
}
\subfigure[News20, $\lambda=0.0002$]{
\includegraphics[width=0.21\textwidth,height=0.2\textwidth]{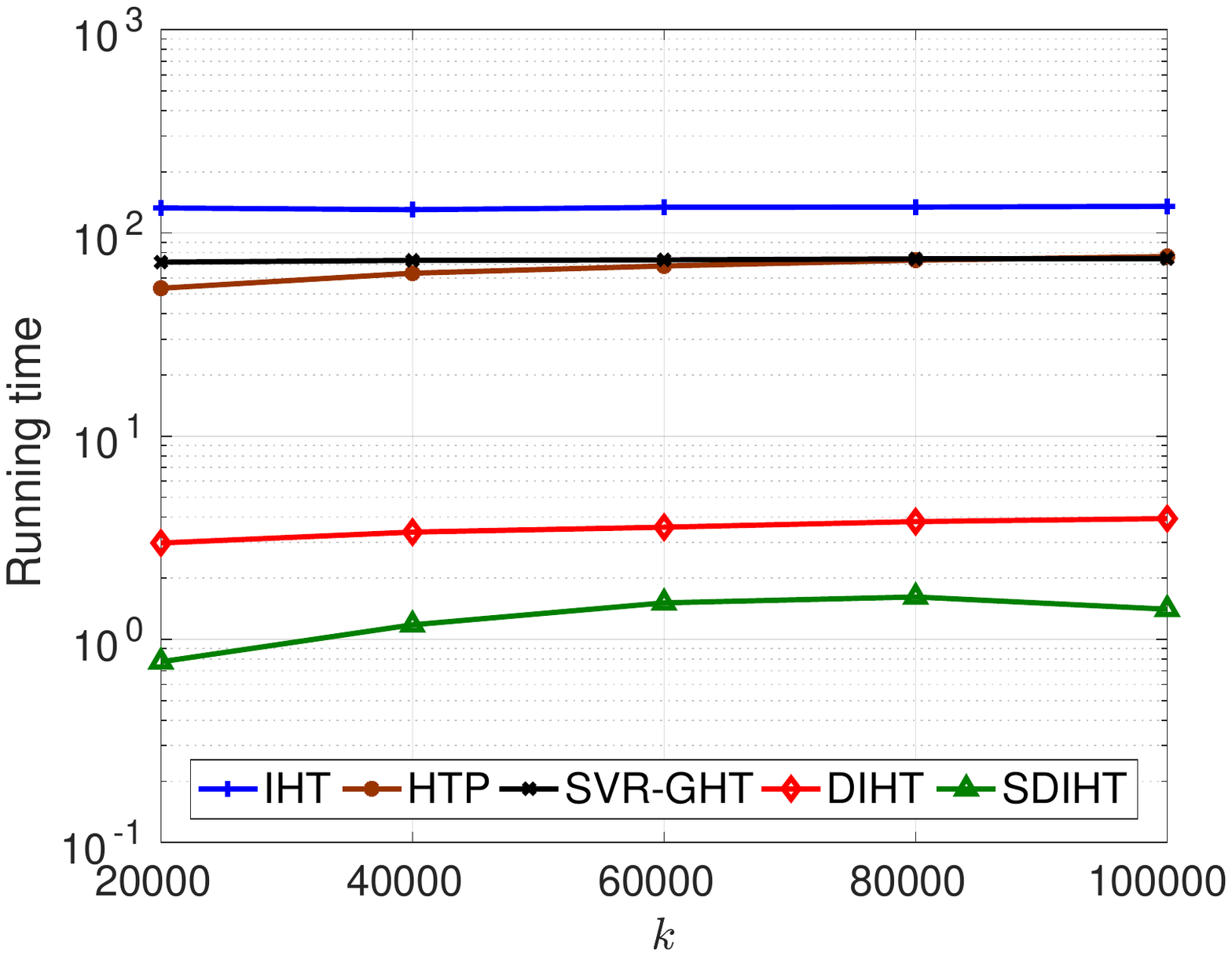}
\label{fig:TimeSDIHTNewSVM}
}
\caption{Huber loss: Running time (in second) comparison between the considered algorithms.}
\label{fig:TimeHuber}
\end{figure*}
\begin{figure*}
\centering
\subfigure[DIHT on RCV1]{
\includegraphics[width=0.21\textwidth,height=0.2\textwidth]{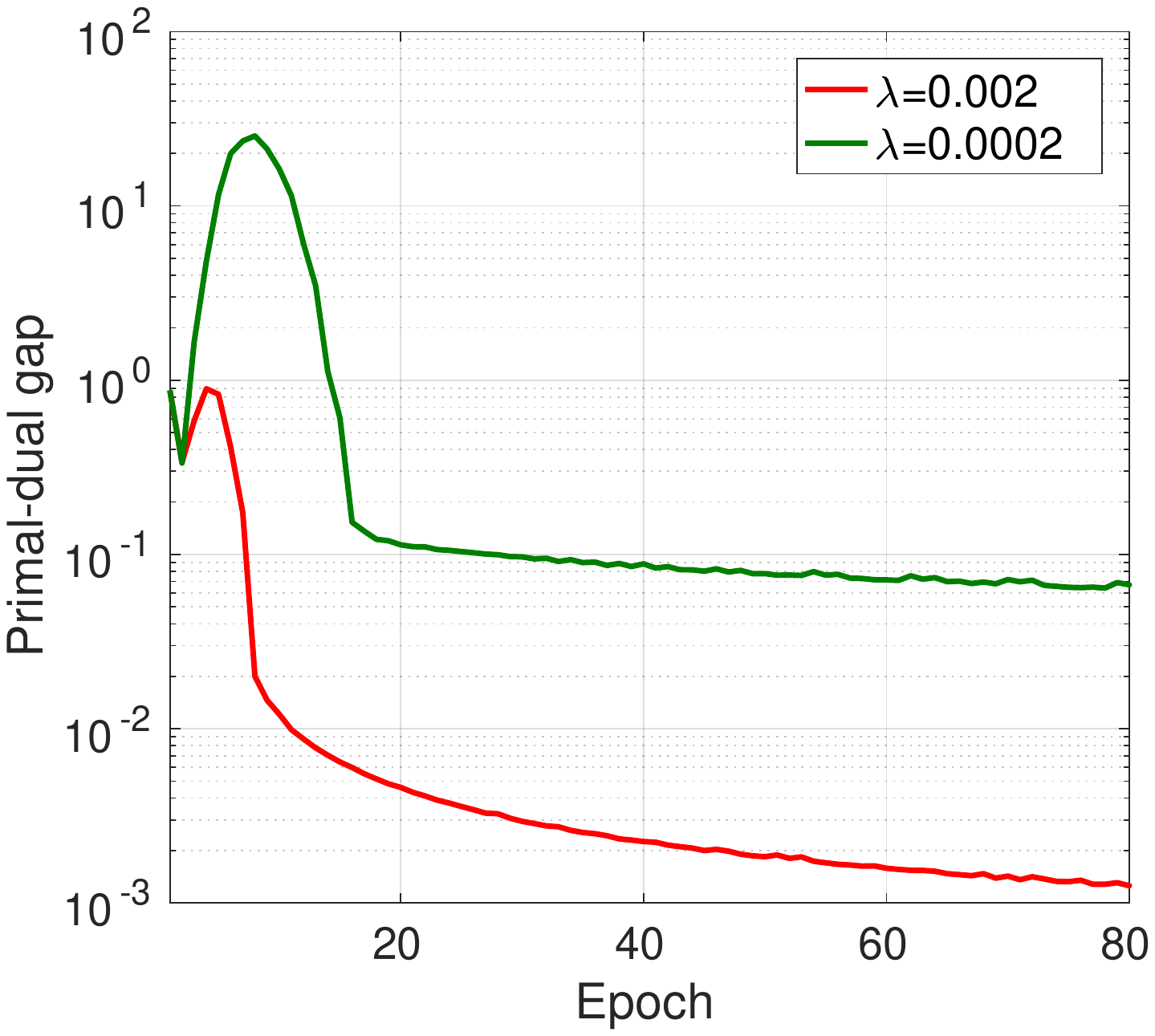}
\label{fig:PDGapDIHTRCVSVM}
}
\subfigure[SDIHT on RCV1]{
\includegraphics[width=0.21\textwidth,height=0.2\textwidth]{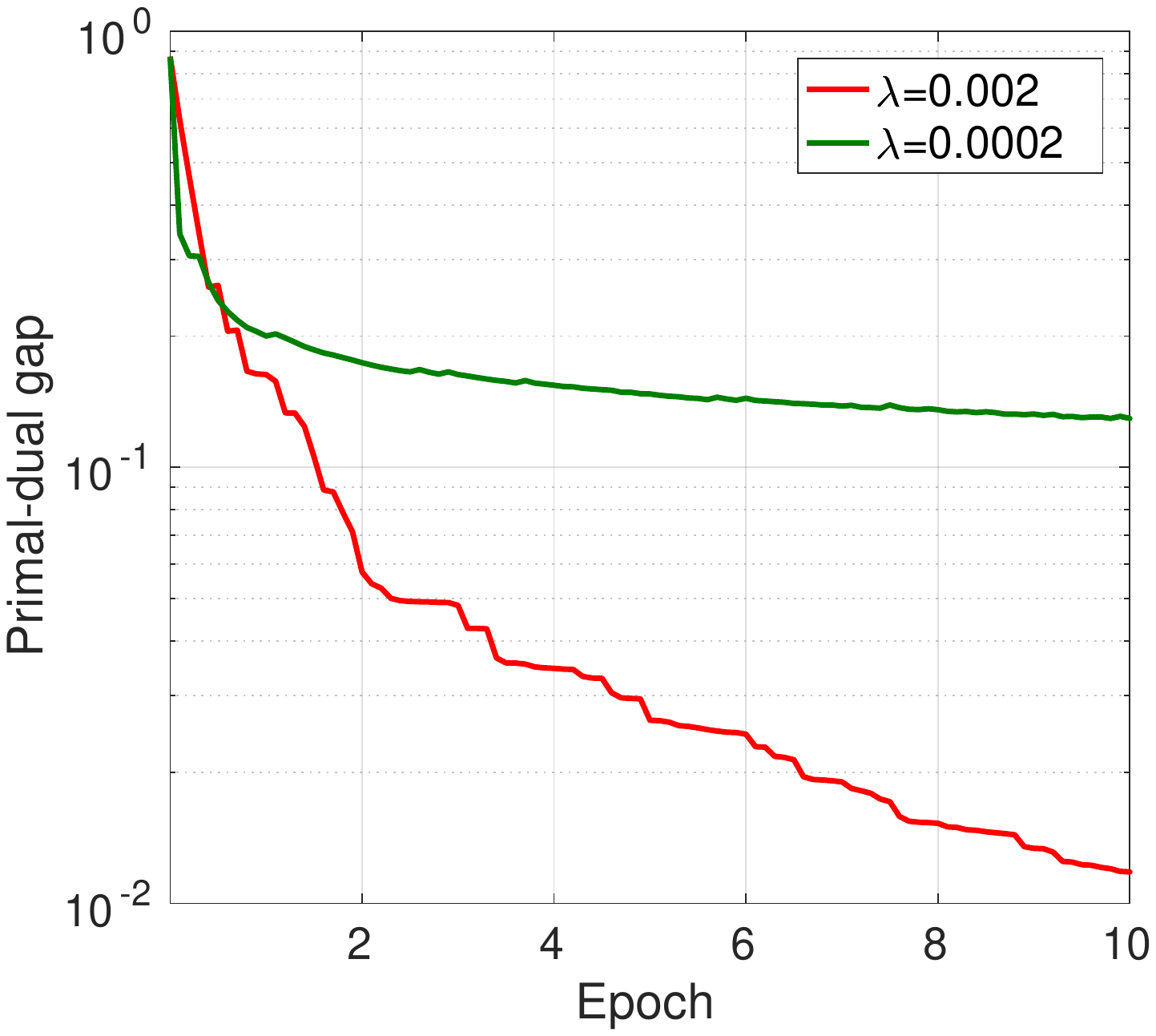}
\label{fig:PDGapSDIHTRCVSVM}
}
\subfigure[DIHT on News20]{
\includegraphics[width=0.21\textwidth,height=0.2\textwidth]{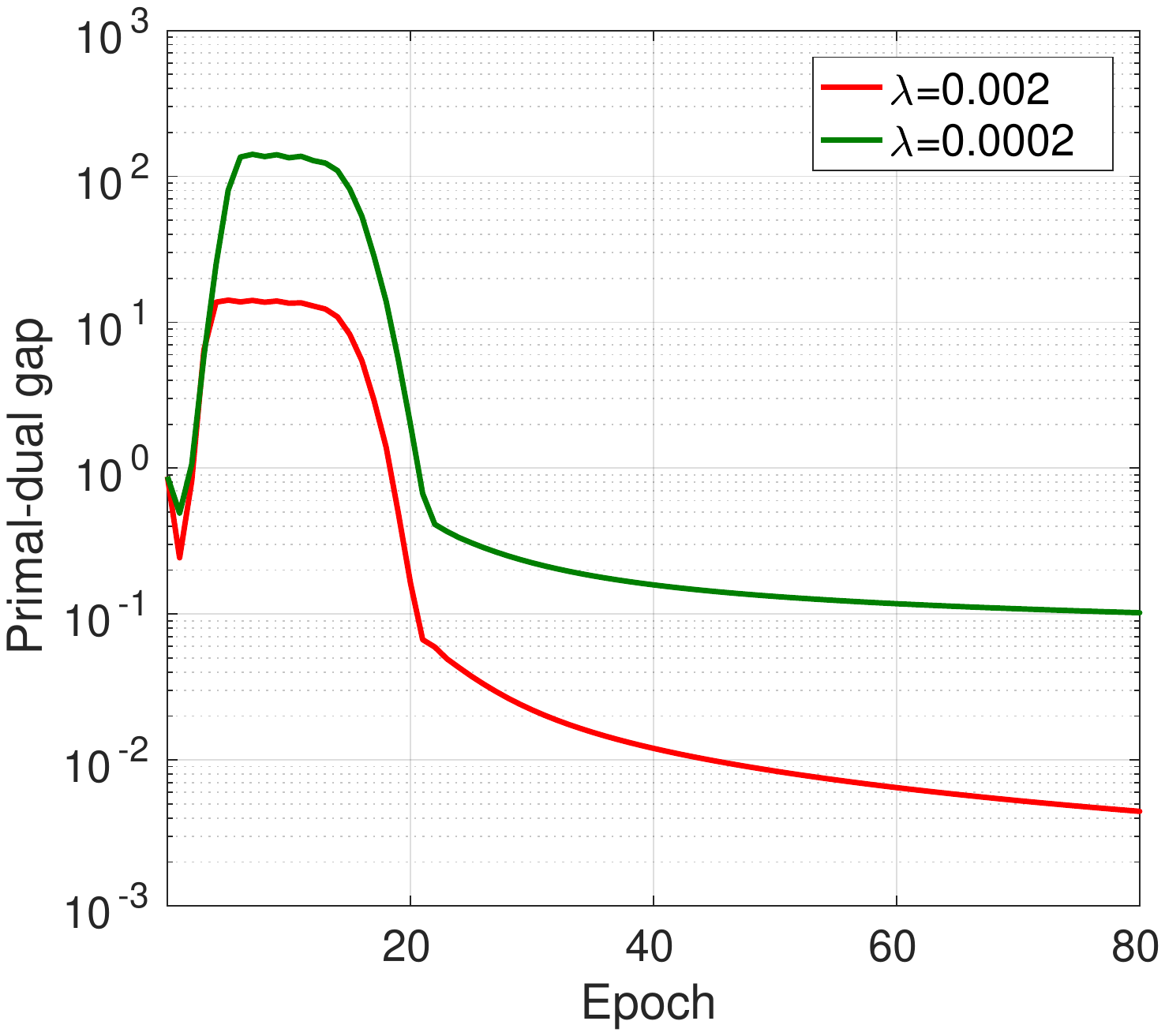}
\label{fig:PDGapDIHTNewSVM}
}
\subfigure[SDIHT on News20]{
\includegraphics[width=0.21\textwidth,height=0.2\textwidth]{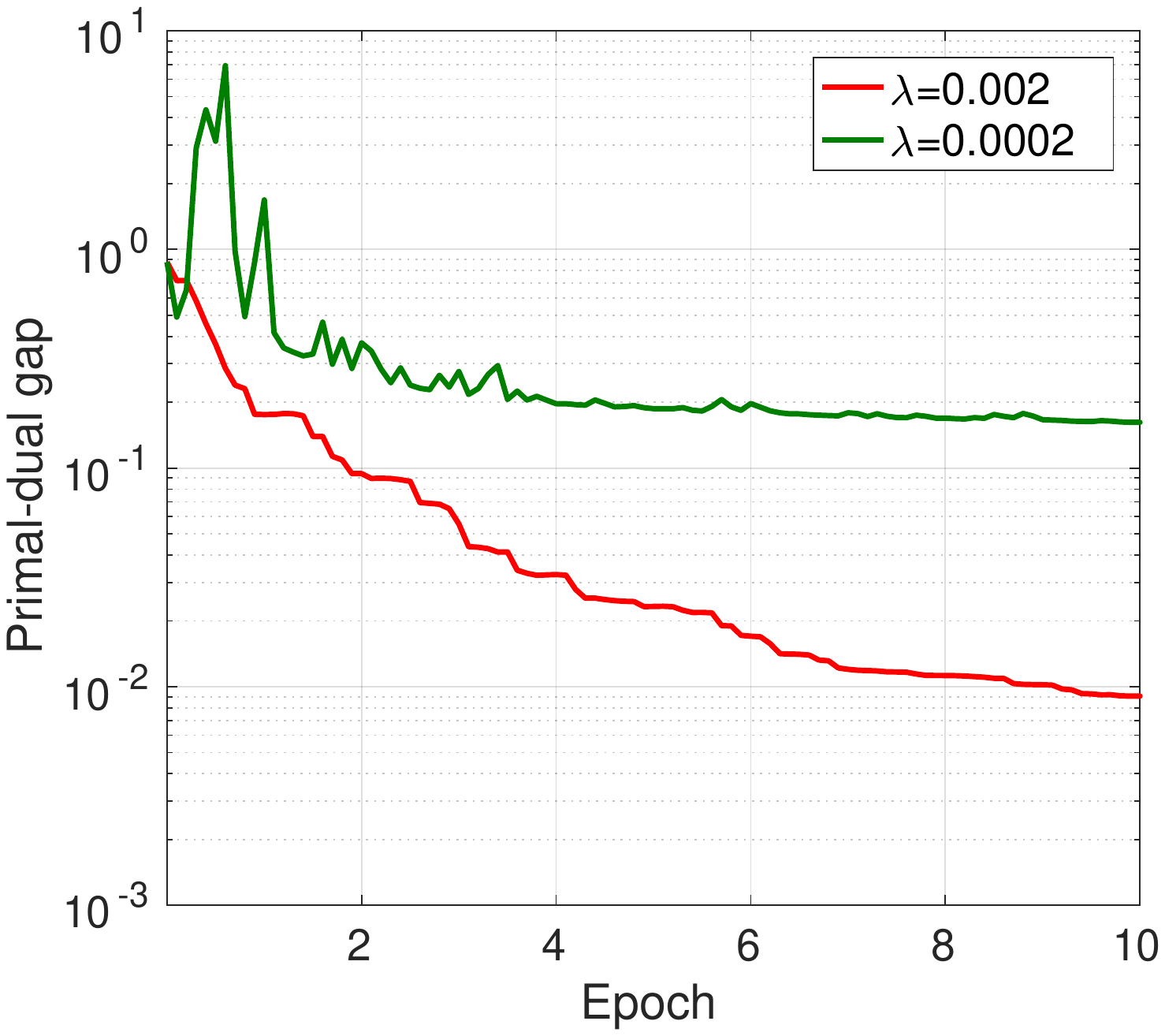}
\label{fig:PDGapSDIHTNewSVM}
}
\caption{Huber loss: The primal-dual gap evolving curves of DIHT and SDIHT. $k=600$ for RCV1 and $k=60000$ for News20.}
\label{fig:PDGapHuber}
\end{figure*}
\subsection{Model training efficiency evaluation}
\subsubsection{Huber loss model learning}
\label{subsubsect:huber}
We now evaluate the considered algorithms on the following $\ell_2$-regularized sparse Huber loss minimization problem:
\begin{equation}
\label{eqn:smoothHinge}
\min\limits_{\|w\|_0\le k} \frac{1}{N}\sum\limits_{i=1}^N l_{Huber}(y_ix_i^\top w)+\frac{\lambda}{2} \|w\|^2,
\end{equation}
where
\[
l_{Huber}(y_ix_i^\top w)=\left\{\begin{array}{ll} 0 & y_ix_i^\top w\ge 1 \\
1-y_ix_i^\top w-\frac{\gamma}{2} &  y_ix_i^\top w<1-\gamma \\
\frac{1}{2\gamma} (1-y_ix_i^\top w)^2 & \text{otherwise}
\end{array}\right..
\]
It is known that~\cite{shalev2013stochastic}
\[
l^*_{Huber}(\alpha_i)=\left\{\begin{array}{ll}  y_i\alpha_i+\frac{\gamma}{2}\alpha_i^2 & \text{if}~~y_i\alpha_i\in [-1,0] \\
+\infty & \text{otherwise}\end{array}\right..
\]
Two binary benchmark datasets from LibSVM data repository\footnote{\url{https://www.csie.ntu.edu.tw/~cjlin/libsvmtools/datasets/binary.html}}, RCV1 ($d=47,236$) and News20 ($d=1,355,191$), are used for algorithm efficiency evaluation and comparison. We select 0.5 million samples from RCV1 dataset for model training ($N\gg d$). For news20 dataset, all of the $19,996$ samples are used as training data ($d\gg N$).

We evaluate the algorithm efficiency of DIHT and SDIHT by comparing their running time against three primal baseline algorithms: IHT, HTP and gradient hard thresholding with stochastic variance reduction (SVR-GHT)~\cite{li2016stochastic}. We first run IHT by setting its convergence criterion to be $\frac{|P(w^{(t)})-P(w^{(t-1)})|}{P(w^{(t)})}\le 10^{-4}$ or maximum number of iteration is reached. After that we test the time cost spend by other algorithms to make the primal loss reach $P(w^{(t)})$. The parameter update step-size of all the considered algorithms is tuned by grid search. The parameter $\gamma$ is set to be 0.25. For the two stochastic algorithms SDIHT and SVR-GHT we randomly partition the training data into $|B|=10$ mini-batches.

Figure~\ref{fig:TimeHuber} shows the running time curves on both datasets under varying sparsity level $k$ and regularization strength $\lambda=0.002,0.0002$. It is obvious that under all tested $(k,\lambda)$ configurations on both datasets, DIHT and SDIHT need much less time than the primal baseline algorithms, IHT, HTP and SVR-GHT to reach the same primal sub-optimality. Figure~\ref{fig:PDGapHuber} shows the primal-dual gap convergence curves with respect to the number of epochs. This group of results support the theoretical prediction in Theorem~\ref{thrm:DIHT_Conv} and~\ref{thrm:SDIHT_Conv} that $\epsilon_{PD}$ converges non-asymptotically.
\begin{figure*}
\centering
\subfigure[RCV1, $\lambda=0.002$]{
\includegraphics[width=0.21\textwidth,height=0.2\textwidth]{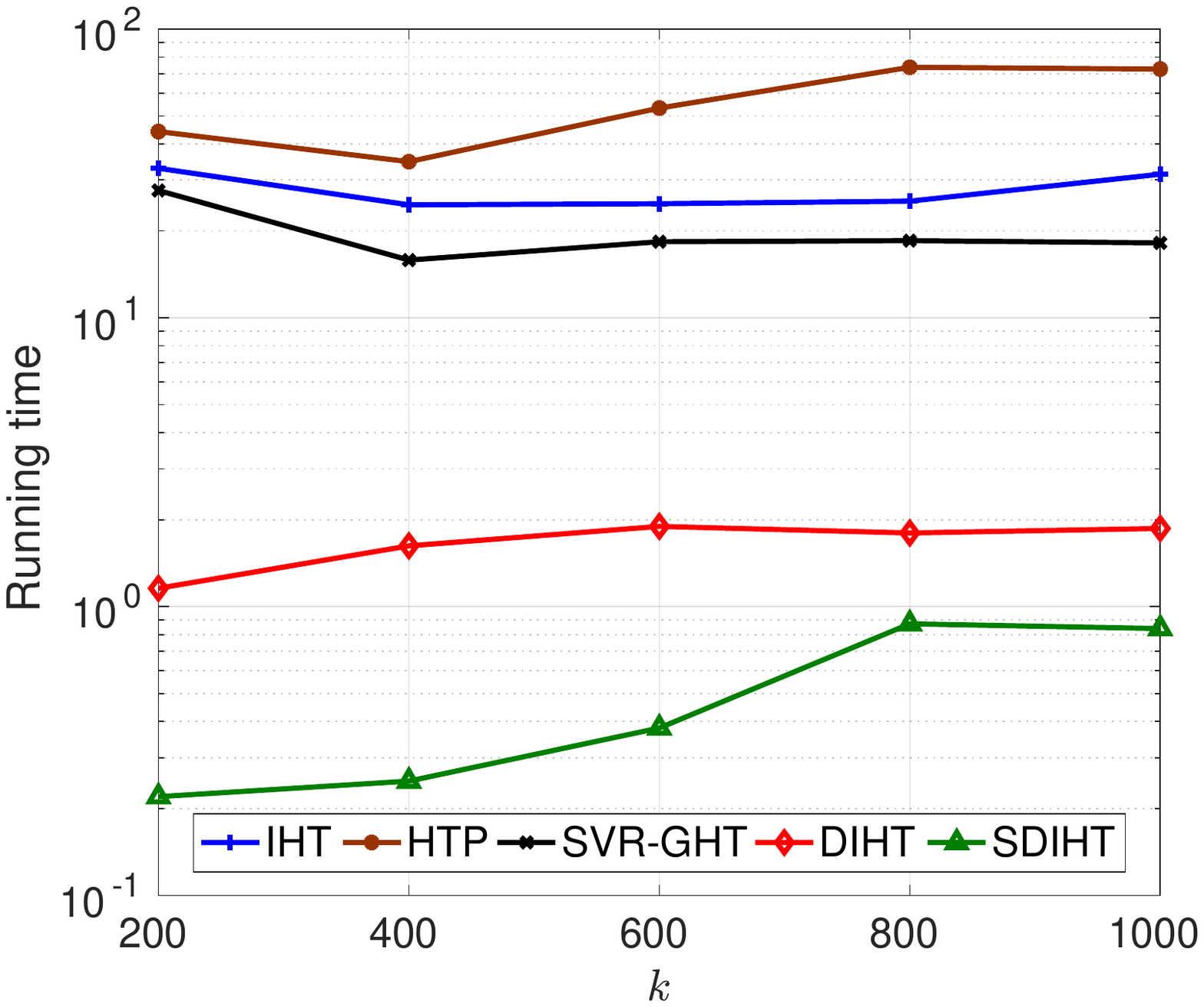}
\label{fig:PDGapDIHTRCVSVM}
}
\subfigure[RCV1, $\lambda=0.0002$]{
\includegraphics[width=0.21\textwidth,height=0.2\textwidth]{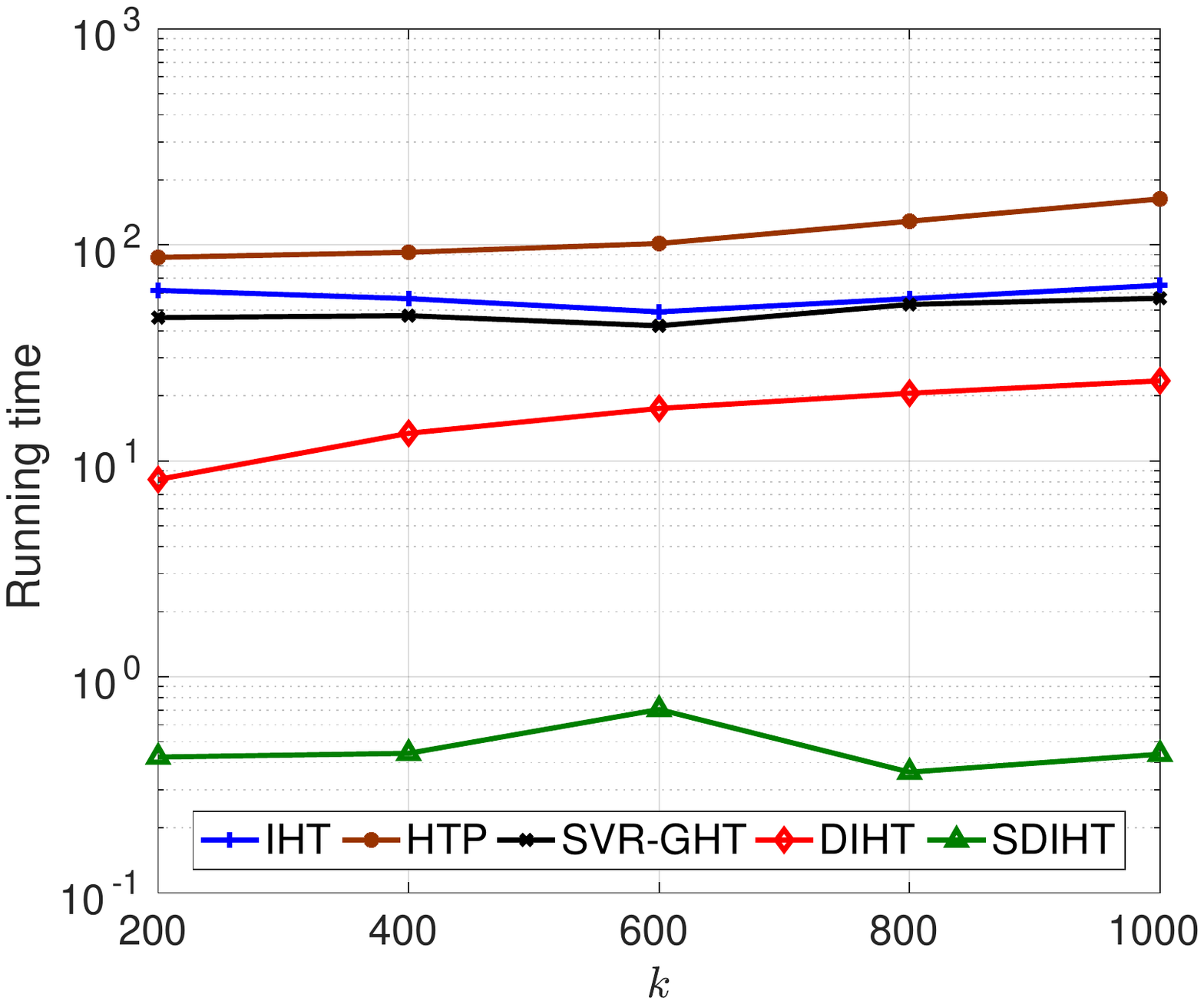}
\label{fig:PDGapSDIHTRCVSVM}
}
\subfigure[News20, $\lambda=0.002$]{
\includegraphics[width=0.21\textwidth,height=0.2\textwidth]{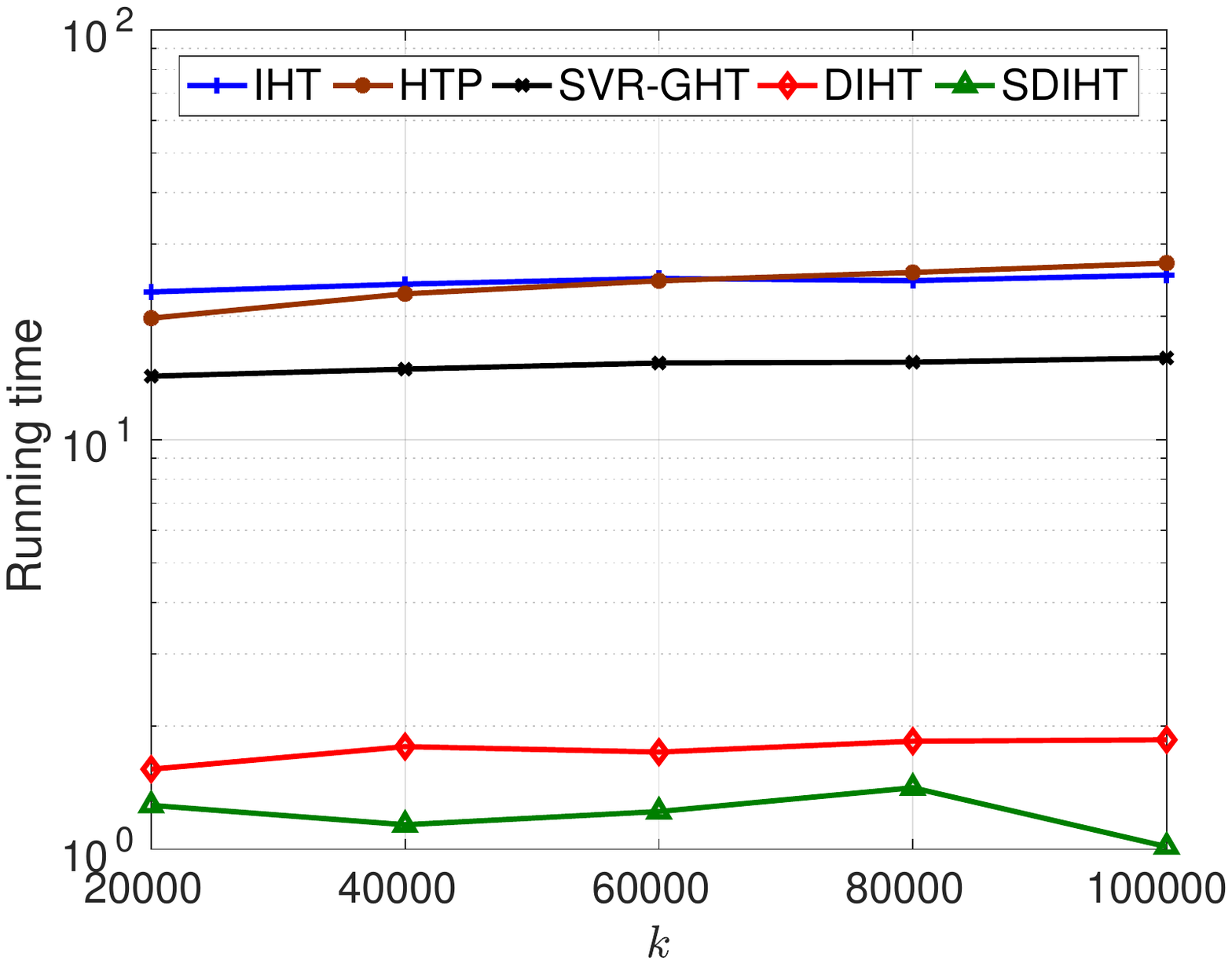}
\label{fig:PDGapDIHTNewSVM}
}
\subfigure[News20, $\lambda=0.0002$]{
\includegraphics[width=0.21\textwidth,height=0.2\textwidth]{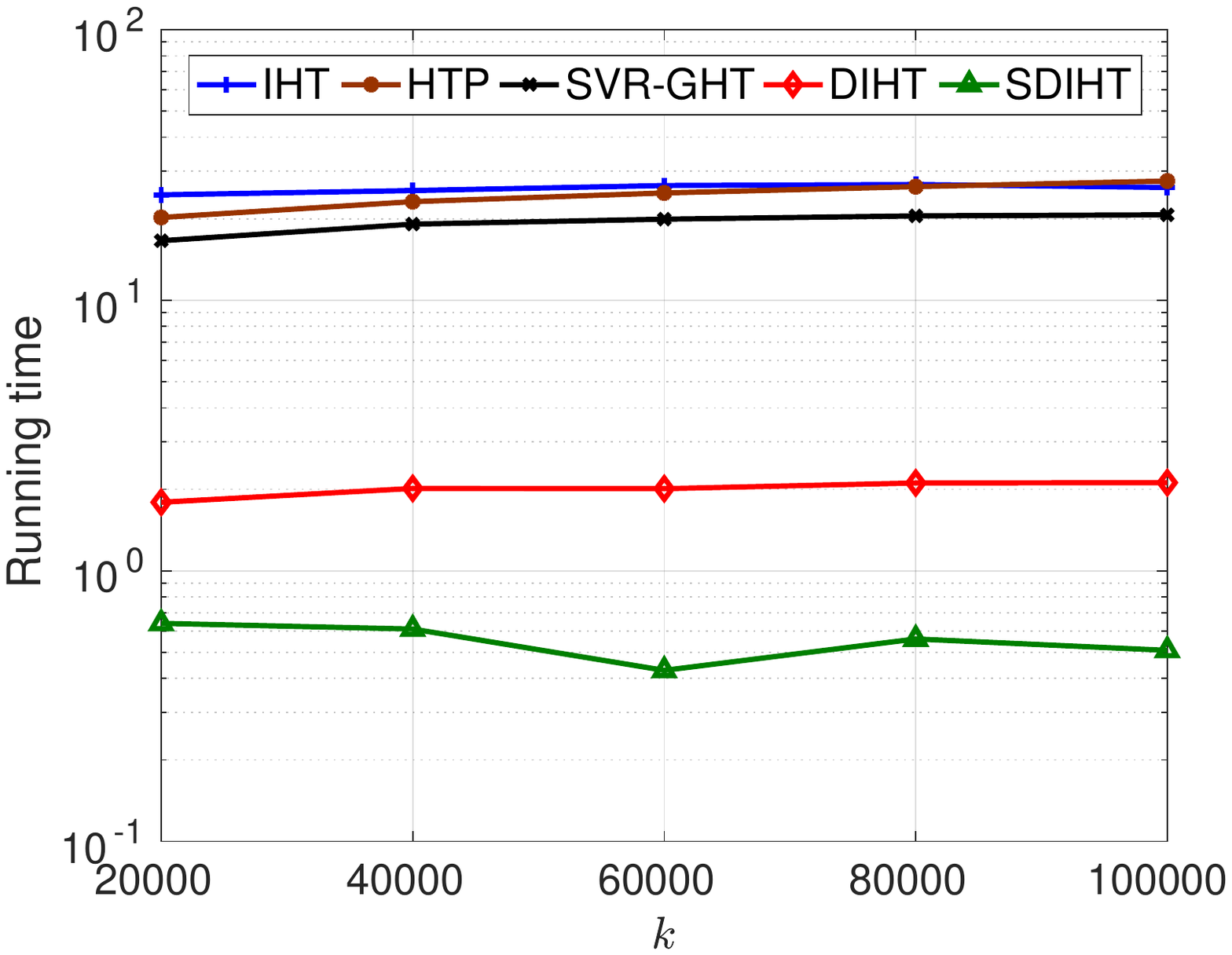}
\label{fig:PDGapSDIHTNewSVM}
}
\caption{Hinge loss: Running time (in second) comparison between the considered algorithms.}
\label{fig:TimeSVM}
\end{figure*}
\begin{figure*}
\centering
\subfigure[DIHT on RCV1]{
\includegraphics[width=0.21\textwidth,height=0.2\textwidth]{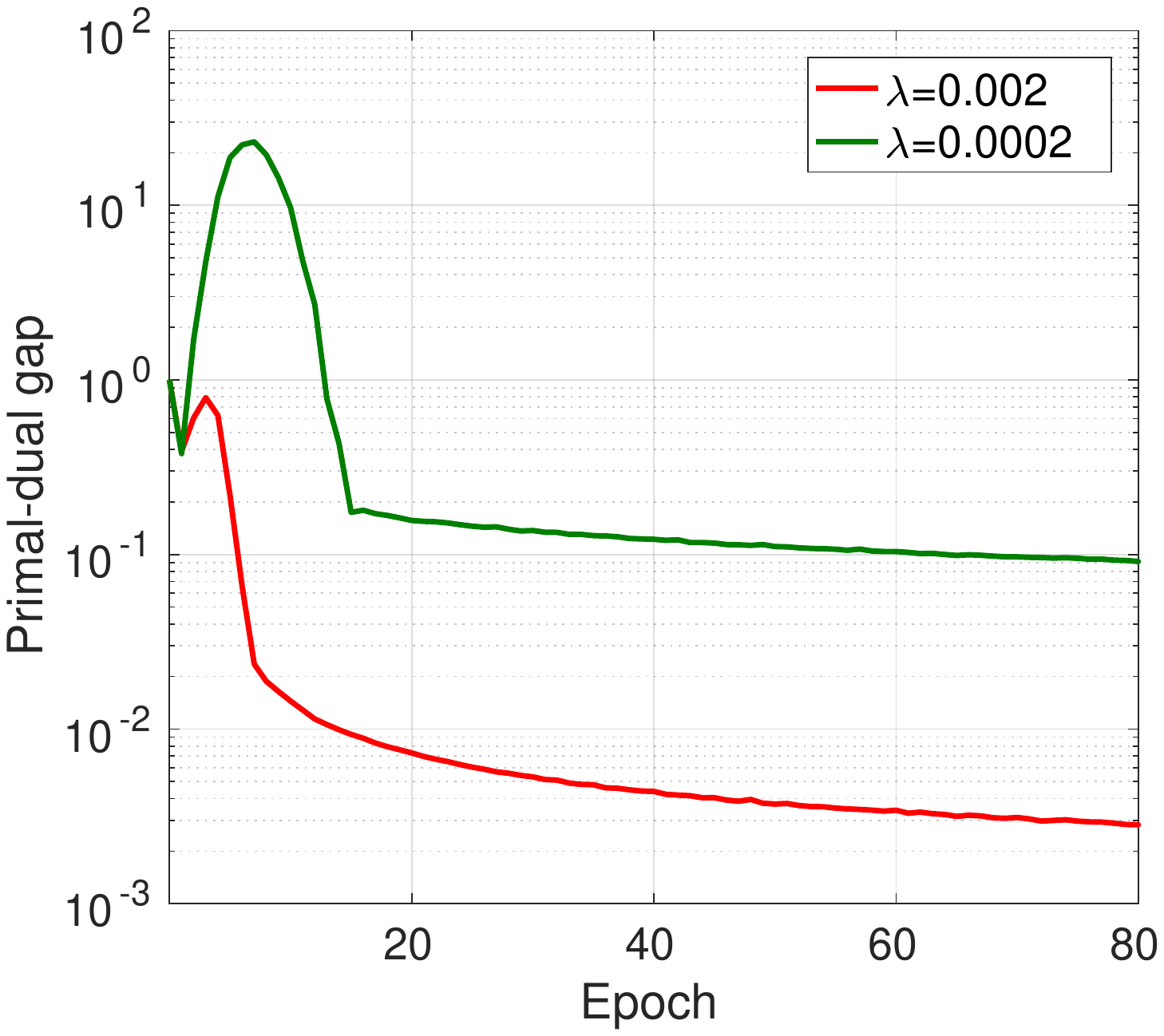}
\label{fig:TimeDIHTRCVSVM}
}
\subfigure[SDIHT on RCV1]{
\includegraphics[width=0.21\textwidth,height=0.2\textwidth]{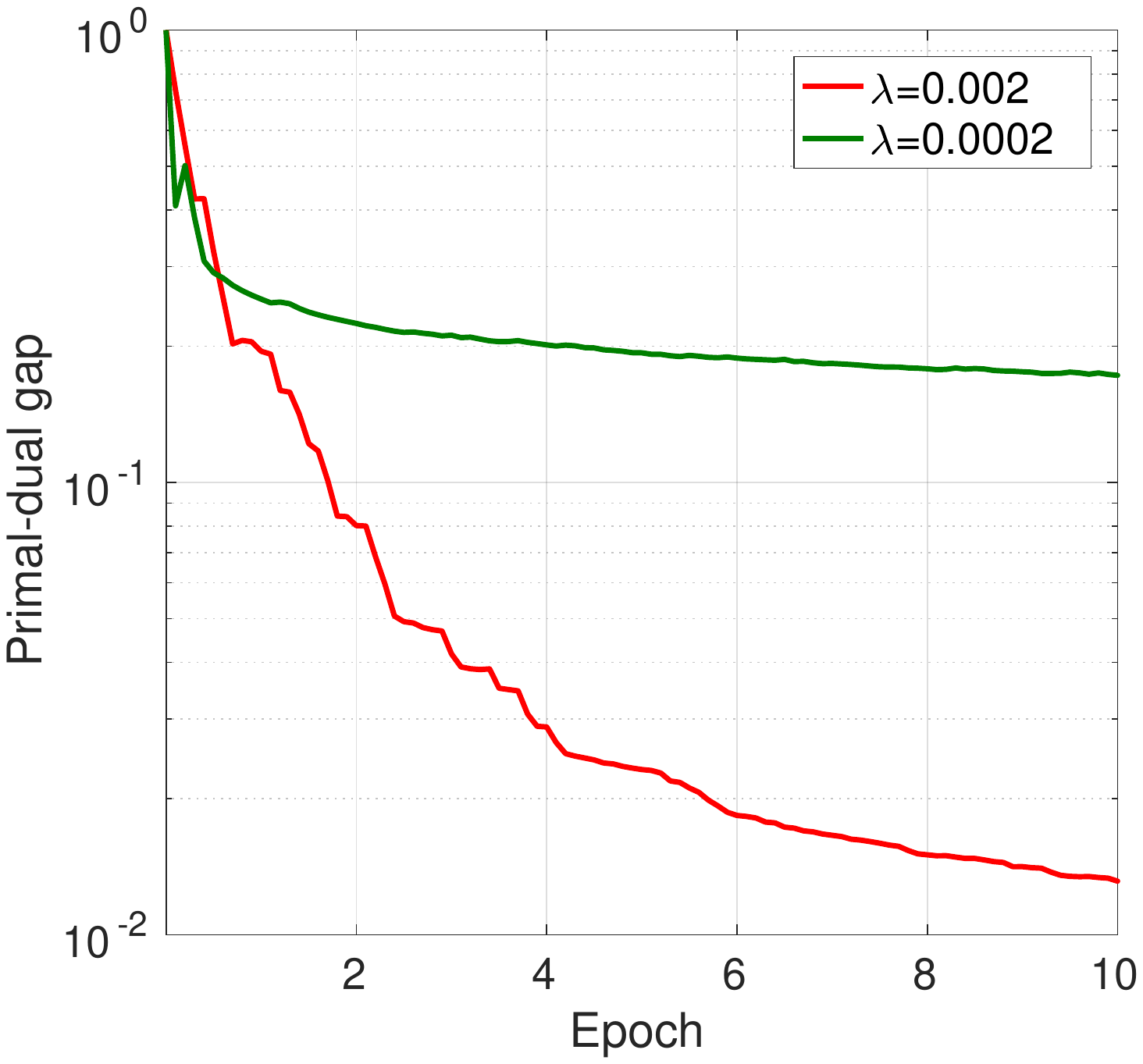}
\label{fig:TimeSDIHTRCVSVM}
}
\subfigure[DIHT on News20]{
\includegraphics[width=0.21\textwidth,height=0.2\textwidth]{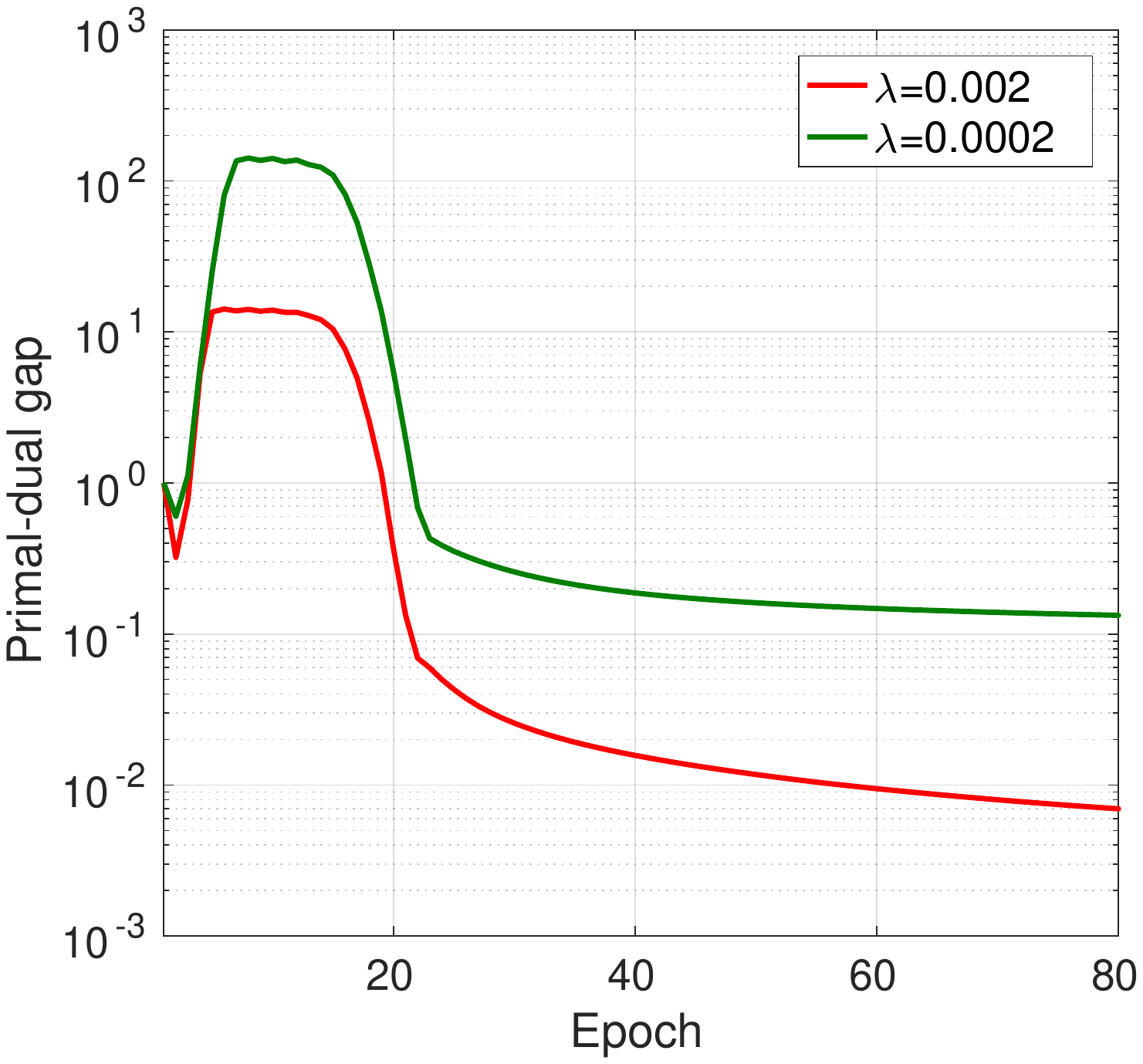}
\label{fig:TimeDIHTNewSVM}
}
\subfigure[SDIHT on News20]{
\includegraphics[width=0.21\textwidth,height=0.2\textwidth]{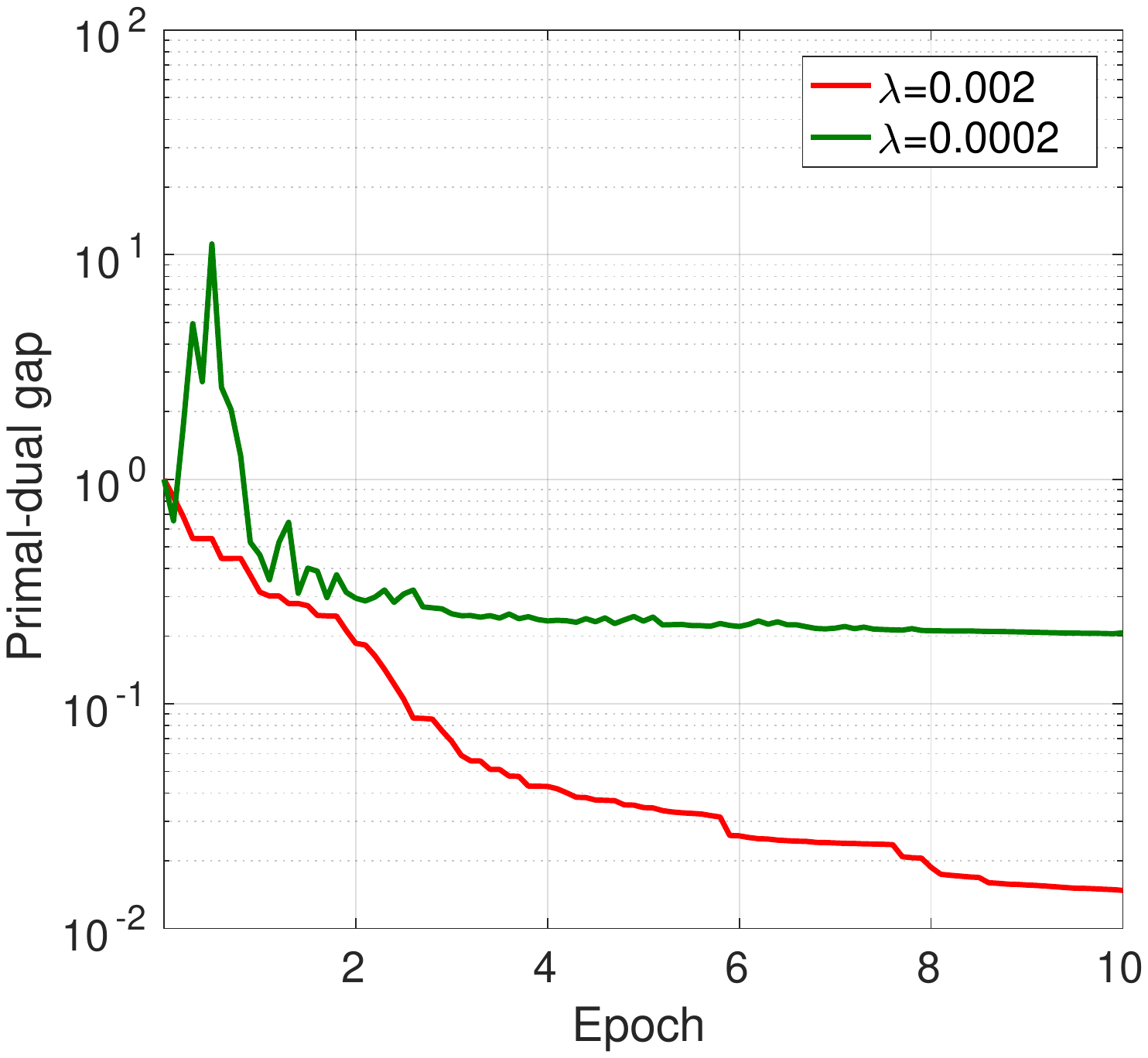}
\label{fig:TimeSDIHTNewSVM}
}
\caption{Hinge loss: The primal-dual gap evolving curves of DIHT and SDIHT. $k=600$ for RCV1 and $k=60000$ for News20.}
\label{fig:PDGapSVM}
\end{figure*}
\subsubsection{Hinge loss model learning}
We further test the performance of our algorithms when applied to the following $\ell_2$-regularized sparse hinge loss minimization problem:
\[
\min\limits_{\|w\|_0\le k} \frac{1}{N}\sum\limits_{i=1}^N l_{Hinge}(y_ix_i^{\top}w)+\frac{\lambda}{2}\|w\|^2,
\]
where $l_{Hinge}(y_ix_i^{\top}w)=\max(0,1-y_iw^{\top}x_i)$. It is standard to know~\cite{hsieh2008dual} \[l^{*}_{Hinge}(\alpha_i)=\left\{\begin{array}{ll}  y_i\alpha_i & \text{if}~~y_i\alpha_i\in [-1,0] \\
+\infty & \text{otherwise}\end{array}\right..\]
We follow the same experiment protocol as in \S~\ref{subsubsect:huber} to compare the considered algorithms on the benchmark datasets. The time cost comparison is illustrated in Figure~\ref{fig:TimeSVM} and the prima-dual gap sub-optimality is illustrated in Figure~\ref{fig:PDGapSVM}. This group of results indicate that DIHT and SDIHT still exhibit remarkable efficiency advantage over the considered primal IHT algorithms even when the loss function is non-smooth.
\vspace{-2mm}
\section{Conclusion}
\label{sect:conclusion}
In this paper, we systematically investigate duality theory and algorithms for solving the sparsity-constrained minimization problem which is NP-hard and non-convex in its primal form. As a theoretical contribution, we develop a sparse Lagrangian duality theory which guarantees strong duality in sparse settings, under mild sufficient and necessary conditions. This theory opens the gate to solve the original NP-hard/non-convex problem equivalently in a dual space. We then propose DIHT as a first-order method to maximize the non-smooth dual concave formulation. The algorithm is characterized by dual super-gradient ascent and primal hard thresholding. To further improve iteration efficiency in large-scale settings, we propose SDIHT as a block stochastic variant of DIHT. For both algorithms we have proved sub-linear primal-dual gap convergence rate when the primal loss is smooth, without assuming RIP-style conditions. Based on our theoretical findings and numerical results, we conclude that DIHT and SDIHT are theoretically sound and computationally attractive alternatives to the conventional primal IHT algorithms, especially when the sample size is smaller than feature dimensionality.
\vspace{-3mm}
\section*{Acknowledgements}
Xiao-Tong Yuan is supported in part by Natural Science Foundation of China (NSFC) under Grant 61402232, Grant 61522308, and in part by Natural Science Foundation of Jiangsu Province of China (NSFJPC) under Grant BK20141003. Qingshan Liu is supported in part by NSFC under Grant 61532009.

\newpage
\bibliography{egbib}
\newpage

\onecolumn
\appendix
\section{Technical Proofs}

\subsection{Proof of Theorem~\ref{thrm:sparse_saddle_point}}
\label{append:proof_sparse_saddle_point}
\begin{proof}
``$\Leftarrow$'': If the pair $(\bar w, \bar \alpha)$ is a sparse saddle point for $L$, then from the definition of conjugate convexity and inequality~\eqref{inequat:sparse_saddle_point} we have
\[
\begin{aligned}
P(\bar w) =\max_{\alpha \in \mathcal{F}^N} L(\bar w, \alpha) \le L(\bar w, \bar \alpha) \le \min_{\|w\|_0 \le k} L(w,\bar \alpha).
\end{aligned}
\]
On the other hand, we know that for any $\|w\|_0\le k$ and $\alpha \in \mathcal{F}^N$
\[
L(w,\alpha) \le \max_{\alpha' \in \mathcal{F}^N} L(w, \alpha') = P(w).
\]
By combining the preceding two inequalities we obtain
\[
P(\bar w) \le \min_{\|w\|_0 \le k} L(w,\bar \alpha) \le \min_{\|w\|_0\le k} P(w) \le P(\bar w).
\]
Therefore $P(\bar w)= \min_{\|w\|_0\le k} P(w)$, i.e., $\bar w$ solves the problem in~\eqref{prob:general}, which proves the necessary condition~(a). Moreover, the above arguments lead to
\[
P(\bar w) =\max_{\alpha \in \mathcal{F}^N} L(\bar w, \alpha) = L(\bar w, \bar \alpha).
\]
Then from the maximizing argument property of convex conjugate we know that $\bar\alpha_i \in \partial l_i(\bar w^\top x_i)$. Thus the necessary condition~(b) holds. Note that
\begin{equation}\label{equat:L_quadratic}
L(w,\bar \alpha) = \frac{\lambda}{2}\left\| w + \frac{1}{N\lambda}\sum_{i=1}^N \bar\alpha_i x_i\right\|^2 - \frac{1}{N}\sum_{i=1}^N l^*_i(\bar \alpha_i)+ C,
\end{equation}
where $C$ is a quantity not dependent on $w$. Let $\bar F=\supp(\bar w)$. Since the above analysis implies $L(\bar w, \bar \alpha) = \min_{\|w\|_0 \le k} L(w,\bar \alpha)$, it must hold that
\[
\bar w = \mathrm{H}_{\bar F}\left(-\frac{1}{N\lambda} \sum_{i=1}^N \bar\alpha_i x_i \right)=\mathrm{H}_k\left(-\frac{1}{N\lambda} \sum_{i=1}^N \bar\alpha_i x_i \right).
\]
This validates the necessary condition~(c).

``$\Rightarrow$'': Conversely, let us assume that $\bar w$ is a $k$-sparse solution to the problem~\eqref{prob:general} (i.e., conditio(a)) and let $\bar \alpha_i\in \partial l_i(\bar w^\top x_i)$ (i.e., condition~(b)). Again from the maximizing argument property of convex conjugate we know that $l_i(\bar w^\top x_i) = \bar \alpha_i  \bar w^\top x_i - l^*_i(\bar \alpha_i)$. This leads to
\begin{equation}\label{inequat:theorem_key_1}
L(\bar w, \alpha) \le P(\bar w) = \max_{\alpha \in \mathcal{F}^N} L(\bar w, \alpha) = L(\bar w, \bar\alpha).
\end{equation}
The sufficient condition~(c) guarantees that $\bar F$ contains the top $k$ (in absolute value) entries of $-\frac{1}{N\lambda} \sum_{i=1}^N \bar\alpha_i x_i$. Then based on the expression in~\eqref{equat:L_quadratic} we can see that the following holds for any $k$-sparse vector $w$
\begin{equation}\label{inequat:theorem_key_2}
L(\bar w, \bar\alpha) \le L(w,\bar \alpha).
\end{equation}
By combining the inequalities~\eqref{inequat:theorem_key_1} and~\eqref{inequat:theorem_key_2} we get that for any $\|w\|_0\le k$ and $\alpha \in \mathcal{F}^N$,
\[
L(\bar w, \alpha) \le L(\bar w, \bar \alpha) \le L(w, \bar \alpha).
\]
This shows that $(\bar w, \bar \alpha)$ is a sparse saddle point of the Lagrangian $L$.
\end{proof}

\subsection{Proof of Theorem~\ref{thrm:sparse_mini_max}}
\label{append:proof_sparse_mini_max}
\begin{proof}
``$\Rightarrow$'': Let $(\bar w, \bar\alpha)$ be a saddle point for $L$. On one hand, note that the following holds for any $k$-sparse $w'$ and $\alpha' \in \mathcal{F}^N$
\[
\min_{\|w\|_0 \le k} L(w, \alpha') \le L(w',\alpha') \le \max_{\alpha \in \mathcal{F}^N}L(w', \alpha),
\]
which implies
\begin{equation}\label{inequat:proof_minimax_key_1}
\max_{\alpha \in \mathcal{F}^N} \min_{\|w\|_0\le k} L(w, \alpha) \le \min_{\|w\|_0\le k}\max_{\alpha \in \mathcal{F}^N} L(w,\alpha).
\end{equation}
On the other hand, since $(\bar w, \bar \alpha)$ is a saddle point for $L$, the following is true:
\begin{equation}\label{inequat:proof_minimax_key_2}
\begin{aligned}
\min_{\|w\|_0 \le k} \max_{\alpha \in \mathcal{F}^N} L(w,\alpha) &\le \max_{\alpha \in \mathcal{F}^N} L(\bar w, \alpha) \\
&\le L(\bar w, \bar \alpha) \le \min_{\|w\|_0\le k} L(w, \bar \alpha) \\
&\le \max_{\alpha \in \mathcal{F}^N}\min_{\|w\|_0\le k} L(w,\alpha).
\end{aligned}
\end{equation}
By combining~\eqref{inequat:proof_minimax_key_1} and~\eqref{inequat:proof_minimax_key_2} we prove the equality in~\eqref{equat:mini-max}.

``$\Leftarrow$'': Assume that the equality in~\eqref{equat:mini-max} holds. Let us define $\bar w$ and $\bar \alpha$ such that
\[
\begin{aligned}
\max_{\alpha \in \mathcal{F}^N}L(\bar w, \alpha) &= \min_{\|w\|_0 \le k}\max_{\alpha \in \mathcal{F}^N}L(w, \alpha) \\
\min_{\|w\|_0 \le k}L(w, \bar\alpha) &=\max_{\alpha \in \mathcal{F}^N}\min_{\|w\|_0 \le k}L(w, \alpha)
\end{aligned}.
\]
Then we can see that for any $\alpha \in \mathcal{F}^N$,
\[
L(\bar w, \bar\alpha) \ge \min_{\|w\|_0\le k} L(w,\bar\alpha) = \max_{\alpha' \in \mathcal{F}^N}L(\bar w, \alpha') \ge L(\bar w, \alpha),
\]
where the ``$=$'' is due to~\eqref{equat:mini-max}. In the meantime, for any $\|w\|_0\le k$,
\[
L(\bar w, \bar\alpha) \le \max_{\alpha \in \mathcal{F}^N}L(\bar w, \alpha) = \min_{\|w'\|_0\le k} L(w',\bar\alpha) \le L(w, \bar \alpha).
\]
This shows that $(\bar w,\bar\alpha)$ is a sparse saddle point for $L$.
\end{proof}

\subsection{Proof of Lemma~\ref{lemma:concave}}
\label{append:proof_concave}
\begin{proof}
For any fixed $\alpha \in \mathcal{F}^N$, then it is easy to verify that the $k$-sparse minimum of $L(w,\alpha)$ with respect to $w$ is attained at the following point:
\[
w(\alpha) = \argmin_{\|w\|_0 \le k} L(w, \alpha) =\mathrm{H}_k\left(-\frac{1}{N\lambda} \sum_{i=1}^N \alpha_i x_i \right).
\]
Thus we have
\[
\begin{aligned}
D(\alpha) &= \min_{\|w\|_0 \le k} L(w, \alpha) = L(w(\alpha), \alpha)\\
&= \frac{1}{N}\sum\limits_{i=1}^N\left(\alpha_iw(\alpha)^\top x_i - l^*_i(\alpha_i)\right) + \frac{\lambda}{2}\|w(\alpha)\|^2 \\
& \overset{\zeta_1}{=} \frac{1}{N}\sum_{i=1}^N - l^*_i(\alpha_i) - \frac{\lambda}{2}\|w(\alpha)\|^2,
\end{aligned}
\]
where ``$\zeta_1$'' follows from the above definition of $w(\alpha)$.

Now let us consider two arbitrary dual variables $\alpha', \alpha'' \in \mathcal{F}^N$ and any $g(\alpha'')\in\frac{1}{N}[w(\alpha'')^\top x_1 - \partial l^*_1(\alpha''_1),...,w(\alpha'')^\top x_N - \partial l^*_N(\alpha''_N)]$. From the definition of $D(\alpha)$ and the fact that $L(w,\alpha)$ is concave with respect to $\alpha$ at any fixed $w$ we can derive that
\[
\begin{aligned}
D(\alpha') &= L(w(\alpha'),\alpha') \\
& \le L(w(\alpha''), \alpha') \\
& \le L(w(\alpha''), \alpha'') + \left\langle g(\alpha'') , \alpha' - \alpha''\right\rangle.
\end{aligned}
\]
This shows that $D(\alpha)$ is a concave function and its super-differential is as given in the theorem.

If we further assume that $w(\alpha)$ is unique and $\{l^*_i\}_{i=1,...,N}$ are differentiable at any $\alpha$, then $\partial D(\alpha)=\frac{1}{N} [w(\alpha)^\top x_1 - \partial l^*_1(\alpha_1),...,w(\alpha)^\top x_N - \partial l^*_N(\alpha_N)]$ becomes unique, which implies that $\partial D(\alpha)$ is the unique super-gradient of $D(\alpha)$.
\end{proof}

\subsection{Proof of Theorem~\ref{thrm:sparse_strong_duality}}
\label{append:proof_sparse_strong_duality}
\begin{proof}
``$\Rightarrow$'': Given the conditions in the theorem, it can be known from Theorem~\ref{thrm:sparse_saddle_point} that the pair $(\bar w, \bar\alpha)$ forms a sparse saddle point of $L$. Thus based on the definitions of sparse saddle point and dual function $D(\alpha)$ we can show that
\[
D(\bar \alpha) = \min_{\|w\|_0\le k} L(w, \bar\alpha) \ge L(\bar w, \bar\alpha) \ge L(\bar w, \alpha)\ge D(\alpha).
\]
This implies that $\bar\alpha$ solves the dual problem in~\eqref{prob:dual}. Furthermore, Theorem~\ref{thrm:sparse_mini_max} guarantees the following
\[
D(\bar\alpha) = \max_{\alpha \in \mathcal{F}^N} \min_{\|w\|_0\le k} L(w, \alpha) =\min_{\|w\|_0\le k}\max_{\alpha \in \mathcal{F}^N} L(w,\alpha) = P(\bar w).
\]
This indicates that the primal and dual optimal values are equal to each other.

``$\Leftarrow$'': Assume that $\bar\alpha$ solves the dual problem in~\eqref{prob:dual} and $D(\bar\alpha) = P(\bar w)$. Since $D(\bar\alpha)\le P(w)$ holds for any $\|w\|_0\le k$, $\bar w$ must be the sparse minimizer of $P(w)$. It follows that
\[
\max_{\alpha \in \mathcal{F}^N} \min_{\|w\|_0\le k} L(w, \alpha) = D(\bar\alpha) =P(\bar w) = \min_{\|w\|_0\le k}\max_{\alpha \in \mathcal{F}^N} L(w,\alpha).
\]
From the ``$\Leftarrow$'' argument in the proof of Theorem~\ref{thrm:sparse_mini_max} and Corollary~\ref{corol:sparse_mini_max} we get that the conditions~(a)$\sim$(c) in Theorem~\ref{thrm:sparse_saddle_point} should be satisfied for $(\bar w, \bar\alpha)$.
\end{proof}

\subsection{Proof of Theorem~\ref{thrm:DIHT_Conv}}
\label{append:proof_DIHT_conv}

We need a series of technical lemmas to prove this theorem. The following lemmas shows that under proper conditions, $w(\alpha)$ is locally smooth around $\bar w = w(\bar \alpha)$.
\begin{lemma}\label{lemma:uniqueness_gap}
Let $X=[x_1,...,x_N]\in \mathbb{R}^{d\times N}$ be the data matrix. Assume that $\{l_i\}_{i=1,...,N}$ are differentiable and
\[
\bar\epsilon: = \bar w_{\min} - \frac{1}{\lambda} \|P'(\bar w)\|_\infty >0.
\]
If $\|\alpha - \bar\alpha\| \le \frac{\lambda N\bar\epsilon}{2\sigma_{\max}(X)} $, then $\supp(w(\alpha))=\supp(\bar w)$ and
\[
\|w(\alpha) - \bar w\| \le \frac{\sigma_{\max}(X,k)}{N\lambda}\|\alpha - \bar\alpha\|.
\]
\end{lemma}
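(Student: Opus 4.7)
}
My plan is to reduce the statement to a controlled perturbation analysis of the hard thresholding map. Write $u(\alpha) := -\frac{1}{N\lambda}X\alpha \in \mathbb{R}^d$, so that $w(\alpha) = \mathrm{H}_k(u(\alpha))$ and in particular $\bar w = \mathrm{H}_k(u(\bar\alpha))$. The strategy is (i) to identify the ``gap'' between the on-support and off-support entries of $u(\bar\alpha)$, (ii) to show that the hypothesized bound on $\|\alpha - \bar\alpha\|$ forces a uniform $\ell_\infty$ perturbation of $u$ that is smaller than half this gap, and (iii) having thereby locked the support, to read off the claimed Lipschitz bound by restricting $X$ to the rows in $\bar F$.

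For step (i) I will exploit the characterization of $\bar w$ recorded in the remark after Theorem~\ref{thrm:sparse_saddle_point}. Since $P'(\bar w) = \frac{1}{N}X\bar\alpha + \lambda \bar w = -\lambda u(\bar\alpha) + \lambda \bar w$, condition $\mathrm{H}_{\bar F}(P'(\bar w)) = 0$ forces $u(\bar\alpha)_i = \bar w_i$ for $i \in \bar F$, whereas on $\bar F^c$ one has $u(\bar\alpha)_i = -\frac{1}{\lambda}P'(\bar w)_i$ and hence $|u(\bar\alpha)_i| \le \frac{1}{\lambda}\|P'(\bar w)\|_\infty$. Consequently
\[
\min_{i\in \bar F} |u(\bar\alpha)_i| - \max_{j\in \bar F^c} |u(\bar\alpha)_j| \ge \bar w_{\min} - \tfrac{1}{\lambda}\|P'(\bar w)\|_\infty = \bar\epsilon.
\]
For step (ii), a direct spectral estimate gives
\[
\|u(\alpha) - u(\bar\alpha)\|_\infty \le \|u(\alpha) - u(\bar\alpha)\|_2 \le \tfrac{\sigma_{\max}(X)}{N\lambda}\|\alpha - \bar\alpha\| \le \tfrac{\bar\epsilon}{2},
\]
under the hypothesis of the lemma. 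Combined with (i), every coordinate of $u(\alpha)$ indexed by $\bar F$ strictly exceeds in magnitude every coordinate indexed by $\bar F^c$, so $\mathrm{H}_k$ selects exactly the index set $\bar F$; this yields $\supp(w(\alpha)) = \supp(\bar w)$.

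Step (iii) is then immediate: since both $w(\alpha)$ and $\bar w$ are supported on $\bar F$, and agree with $u(\alpha)$ and $u(\bar\alpha)$ respectively on $\bar F$, we have $w(\alpha) - \bar w = \mathrm{H}_{\bar F}(u(\alpha) - u(\bar\alpha)) = -\frac{1}{N\lambda}X_{\bar F}(\alpha - \bar\alpha)$, and $\|X_{\bar F}(\alpha - \bar\alpha)\| \le \sigma_{\max}(X_{\bar F})\|\alpha - \bar\alpha\| \le \sigma_{\max}(X,k)\|\alpha - \bar\alpha\|$ since $|\bar F| \le k$. The main delicate point I anticipate is step (ii): one must verify that the $\ell_\infty$ perturbation of $u$ is bounded by \emph{half} of $\bar\epsilon$ (not just by $\bar\epsilon$), because the gap between the smallest on-support entry and the largest off-support entry can shrink by up to twice the perturbation when both sides move in the adversarial direction; this is precisely why the radius of validity in the hypothesis carries the factor $\tfrac{1}{2}$.
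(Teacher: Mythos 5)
Your proposal is correct and follows essentially the same route as the paper's proof: both compare $u(\alpha)=-\frac{1}{N\lambda}X\alpha$ to $u(\bar\alpha)$, use the optimality characterization of $\bar w$ to establish an $\bar\epsilon$ magnitude gap between on-support and off-support entries, invoke the hypothesis to bound the perturbation by $\bar\epsilon/2$ so that $\mathrm{H}_k$ keeps selecting $\bar F$, and then restrict to $X_{\bar F}$ for the Lipschitz bound. The only cosmetic difference is that you derive the on/off-support identities for $u(\bar\alpha)$ directly from $P'(\bar w)$ via the remark after Theorem~\ref{thrm:sparse_saddle_point}, while the paper cites Theorem~\ref{thrm:sparse_strong_duality} for the same facts.
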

\begin{proof}
For any $\alpha \in \mathcal{F}^N$, let us define
\[
\tilde w(\alpha) = -\frac{1}{N\lambda} \sum_{i=1}^N \alpha_i x_i.
\]
Consider $\bar F = \supp(\bar w)$. Given $\bar\epsilon >0$, it is known from Theorem~\ref{thrm:sparse_strong_duality} that $\bar w = \mathrm{H}_{\bar F} \left(\tilde w(\bar\alpha) \right)$ and $\frac{P'(\bar w)}{\lambda} = \mathrm{H}_{\bar F^c} \left(-\tilde w(\bar\alpha) \right)$. Then $\bar\epsilon >0$ implies $\bar F$ is unique, i.e., the top $k$ entries of $\tilde w(\bar\alpha)$ is unique. Given that $\|\alpha - \bar\alpha\| \le \frac{\lambda N\bar\epsilon}{2\sigma_{\max}(X)} $, it can be shown that
\[
\|\tilde w(\alpha) - \tilde w(\bar\alpha)\| = \frac{1}{N\lambda}\|X(\alpha - \bar\alpha)\| \le\frac{\sigma_{\max}(X)}{N\lambda} \|\alpha - \bar\alpha\| \le \frac{\bar\epsilon}{2}.
\]
This indicates that $\bar F$ still contains the (unique) top $k$ entries of $\tilde w(\alpha)$. Therefore,
\[
\supp(w(\alpha)) = \bar F = \supp(\bar w).
\]
Then it must hold that
\[
\begin{aligned}
\|w(\alpha) - w(\bar\alpha)\|  & =\|\mathrm{H}_{\bar F} \left(\tilde w(\alpha) \right) - \mathrm{H}_{\bar F} \left(\tilde w(\bar\alpha)\right)\| \\
&= \frac{1}{N\lambda}\|X_{\bar F}(\alpha - \bar\alpha)\| \\
&\le\frac{\sigma_{\max}(X, k)}{N\lambda} \|\alpha - \bar\alpha\|.
\end{aligned}
\]
This proves the desired bound.
\end{proof}

The following lemma bounds the estimation error $\|\alpha - \bar\alpha\|=O(\sqrt{\langle D'(\alpha), \bar\alpha - \alpha\rangle})$ when the primal loss $\{l_i\}_{i=1}^N$ are smooth.
\begin{lemma}\label{lemma:strong_concavity}
Assume that the primal loss functions $\{l_i(\cdot)\}_{i=1}^N$ are $1/\mu$-smooth. Then the following inequality holds for any $\alpha, \alpha'' \in \mathcal{F}$ and $g(\alpha'') \in \partial D(\alpha'')$:
\[
D(\alpha') \le D(\alpha'') + \langle g(\alpha'') , \alpha' - \alpha'' \rangle - \frac{\lambda N \mu+\sigma^2_{\min}(X,k)}{2\lambda N^2} \|\alpha' - \alpha''\|^2.
\]
Moreover, $\forall \alpha \in \mathcal{F}$ and $g(\alpha)\in \partial D(\alpha)$,
\[
\|\alpha - \bar\alpha\| \le \sqrt{\frac{2\lambda N^2\langle g(\alpha), \bar\alpha - \alpha\rangle}{\lambda N \mu+\sigma^2_{\min}(X,k)}}.
\]
\end{lemma}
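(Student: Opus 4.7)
The plan is to exploit strong convexity of the conjugate functions $l_i^*$ (which follows from $1/\mu$-smoothness of $l_i$) and then to handle the awkward non-smooth term $-\tfrac{\lambda}{2}\|w(\alpha)\|^2$ via a local quadratic surrogate indexed by the active support. First, standard conjugate duality gives that each $l_i^*$ is $\mu$-strongly convex, so $\alpha \mapsto \tfrac{1}{N}\sum_i -l_i^*(\alpha_i)$ is $(\mu/N)$-strongly concave. This accounts for the $\lambda N\mu$ portion of the claimed modulus.

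For the remaining $\sigma^2_{\min}(X,k)$ portion, fix the reference point $\alpha''$ and set $F = \supp(w(\alpha''))$, so $|F|\le k$. Define the surrogate
\[
\tilde D_F(\alpha) := \frac{1}{N}\sum_{i=1}^N -l_i^*(\alpha_i) - \frac{1}{2\lambda N^2}\alpha^\top X_F^\top X_F \alpha.
\]
I would then establish three properties: (i) $\tilde D_F(\alpha'') = D(\alpha'')$, because $w(\alpha'') = \mathrm{H}_F\bigl(-\tfrac{1}{N\lambda}X\alpha''\bigr)$ by the definition of $F$, so $\|w(\alpha'')\|^2 = \tfrac{1}{\lambda^2 N^2}\|X_F\alpha''\|^2$; (ii) $\tilde D_F(\alpha) \ge D(\alpha)$ for every $\alpha$, because $w(\alpha) = \mathrm{H}_k\bigl(-\tfrac{1}{N\lambda}X\alpha\bigr)$ selects the top-$k$ entries and hence $\|w(\alpha)\|^2 \ge \|\mathrm{H}_F(-\tfrac{1}{N\lambda}X\alpha)\|^2$ whenever $|F|\le k$; (iii) computing partial derivatives shows that the identity $w(\alpha'')^\top x_j = -\tfrac{1}{N\lambda}(X_F^\top X_F\alpha'')_j$ holds, which implies that every super-gradient $g(\alpha'')\in \partial D(\alpha'')$ described in Lemma~\ref{lemma:concave} is also a super-gradient of $\tilde D_F$ at $\alpha''$. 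Since $\tilde D_F$ is a concave quadratic in $\alpha$ with Hessian whose smallest eigenvalue is at least $\mu/N + \sigma^2_{\min}(X,k)/(\lambda N^2) = (\lambda N\mu + \sigma^2_{\min}(X,k))/(\lambda N^2)$ by the definition of $\sigma^2_{\min}(X,k)$ and $|F|\le k$, strong concavity gives
\[
\tilde D_F(\alpha') \le \tilde D_F(\alpha'') + \langle g(\alpha''), \alpha' - \alpha'' \rangle - \frac{\lambda N\mu + \sigma^2_{\min}(X,k)}{2\lambda N^2}\|\alpha' - \alpha''\|^2.
\]
Chaining this with (i) and (ii), namely $D(\alpha') \le \tilde D_F(\alpha')$ and $\tilde D_F(\alpha'') = D(\alpha'')$, yields the first inequality in the lemma.

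The second inequality then follows by specializing $\alpha'' := \alpha$ and $\alpha' := \bar\alpha$, and using optimality $D(\bar\alpha) \ge D(\alpha)$ to rearrange into the claimed norm bound. The main obstacle is step (iii): verifying that the super-gradient expression from Lemma~\ref{lemma:concave} coincides with the gradient of the quadratic surrogate $\tilde D_F$ at $\alpha''$; this boils down to a careful index computation showing $w(\alpha'')^\top x_j = -\tfrac{1}{N\lambda}\langle (x_j)_F,(X\alpha'')_F\rangle$, which I expect to handle by writing $w(\alpha'')$ explicitly as the restriction of $-\tfrac{1}{N\lambda}X\alpha''$ to coordinates in $F$ and summing in the right order.
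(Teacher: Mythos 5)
Your argument is correct and is essentially the paper's own proof in different packaging: the paper runs the same three facts — $\mu$-strong convexity of $l_i^*$, the majorization $\|\mathrm{H}_k(\cdot)\|^2\ge\|\mathrm{H}_{F''}(\cdot)\|^2$, and the expansion of the restricted quadratic with the $\sigma^2_{\min}(X,k)$ bound and the identification of the cross term with the super-gradient — as a single chain of inequalities rather than naming the surrogate $\tilde D_F$. The only cosmetic slip is calling $\tilde D_F$ a ``concave quadratic'' (the $-l_i^*$ terms need not be quadratic), but your strong-concavity modulus computation and the final specialization $\alpha''=\alpha$, $\alpha'=\bar\alpha$ with $D(\alpha)\le D(\bar\alpha)$ match the paper exactly.
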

\begin{proof}
Recall that
\[
D(\alpha) = \frac{1}{N}\sum_{i=1}^N- l^*_i(\alpha_i) - \frac{\lambda}{2}\|w(\alpha)\|^2,
\]
Now let us consider two arbitrary dual variables $\alpha', \alpha'' \in \mathcal{F}$. The assumption of $l_i$ being $1/\mu$-smooth implies that its convex conjugate function $l_i^*$ is $\mu$-strongly-convex. Let $F''=\supp(w(\alpha''))$. Then
\[
\begin{aligned}
D(\alpha')=& \frac{1}{N}\sum_{i=1}^N- l^*_i(\alpha'_i) - \frac{\lambda}{2}\|w(\alpha')\|^2\\
=& \frac{1}{N}\sum_{i=1}^N- l^*_i(\alpha'_i) - \frac{\lambda}{2}\left\|\mathrm{H}_k\left(-\frac{1}{N\lambda} \sum_{i=1}^N \alpha'_i x_i \right)\right\|^2\\
\le& \frac{1}{N}\sum_{i=1}^N \left(- l^*_i(\alpha''_i) - l^{*'}_i(\alpha''_i)(\alpha'_i - \alpha''_i) - \frac{\mu}{2}(\alpha'_i-\alpha''_i)^2 \right) - \frac{\lambda}{2}\left\|\mathrm{H}_{F''}\left(-\frac{1}{N\lambda} \sum_{i=1}^N \alpha'_i x_i \right)\right\|^2\\
\le& \frac{1}{N}\sum_{i=1}^N \left(- l^*_i(\alpha''_i) - l^{*'}_i(\alpha''_i)(\alpha'_i - \alpha''_i) - \frac{\mu}{2}(\alpha'_i-\alpha''_i)^2 \right) - \frac{\lambda}{2}\|w(\alpha'')\|^2 + \frac{1}{N}\sum_{i=1}^N x_i^\top w(\alpha'')(\alpha'_i - \alpha''_i) \\
 &- \frac{1}{2\lambda N^2} (\alpha' - \alpha'')^\top X_{F''}^\top X_{F''}(\alpha' - \alpha'')\\
\le& D(\alpha'') + \langle g(\alpha''), \alpha' - \alpha ''\rangle -\frac{\lambda N\mu + \sigma^2_{\min}(X,k)}{2\lambda N^2}\|\alpha' - \alpha''\|^2.
\end{aligned}
\]
This proves the first desirable inequality in the lemma. By invoking the above inequality and using the fact $D(\alpha) \le D(\bar\alpha)$ we get that
\[
\begin{aligned}
D(\bar\alpha) \le& D(\alpha) + \langle g(\alpha), \bar\alpha - \alpha\rangle - \frac{\lambda N \mu + \sigma^2_{\min}(X,k)}{2\lambda N^2}\|\alpha - \bar\alpha\|^2 \\
\le&  D(\bar\alpha) + \langle g(\alpha), \bar\alpha - \alpha\rangle - \frac{\lambda N \mu + \sigma^2_{\min}(X,k)}{2\lambda N^2}\|\alpha - \bar\alpha\|^2,
\end{aligned}
\]
which leads to the second desired bound.
\end{proof}

The following lemma gives a simple expression of the gap for properly related primal-dual pairs.
\begin{lemma}\label{lemma:gap}
Given a dual variable $\alpha \in \mathcal{F}^N$ and the related primal variable
\[
w = \mathrm{H}_k\left(-\frac{1}{N\lambda} \sum_{i=1}^N \alpha_i x_i \right).
\]
The primal-dual gap $\epsilon_{PD}(w,\alpha)$ can be expressed as:
\[
\epsilon_{PD}(w,\alpha) = \frac{1}{N}\sum_{i=1}^N \left(l_i(w^\top x_i) + l_i^*(\alpha_i) - \alpha_i w^\top x_i\right).
\]
\end{lemma}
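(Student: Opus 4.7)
The plan is to unpack the definition $\epsilon_{PD}(w,\alpha) = P(w) - D(\alpha)$, substitute the closed-form dual expression from Lemma~\ref{lemma:concave}, and then use the specific structure of the truncated primal iterate $w$ to convert the quadratic term $\lambda\|w\|^2$ into the desired cross term $-\frac{1}{N}\sum_i \alpha_i w^\top x_i$.

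First I would observe that the hypothesis on $w$ is precisely $w = w(\alpha)$ in the notation of Lemma~\ref{lemma:concave}, so that lemma directly gives
\[
D(\alpha) = -\frac{1}{N}\sum_{i=1}^N l^*_i(\alpha_i) - \frac{\lambda}{2}\|w\|^2.
\]
Combined with the definition $P(w) = \frac{1}{N}\sum_i l_i(w^\top x_i) + \frac{\lambda}{2}\|w\|^2$, this yields
\[
\epsilon_{PD}(w,\alpha) = \frac{1}{N}\sum_{i=1}^N \bigl( l_i(w^\top x_i) + l^*_i(\alpha_i) \bigr) + \lambda \|w\|^2.
\]
So the whole content of the lemma reduces to the identity $\lambda\|w\|^2 = -\frac{1}{N}\sum_i \alpha_i w^\top x_i$.

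The key step is to verify this identity using the truncation structure. Let $F = \supp(w)$ with $|F|\le k$. By construction, for every $j \in F$ the coordinate satisfies $[w]_j = -\frac{1}{N\lambda}\sum_i \alpha_i [x_i]_j$, while $[w]_j = 0$ for $j \notin F$. Therefore
\[
\frac{1}{N}\sum_{i=1}^N \alpha_i w^\top x_i = \frac{1}{N}\sum_{j\in F}[w]_j \sum_{i=1}^N \alpha_i [x_i]_j = \sum_{j\in F}[w]_j \cdot \bigl(-\lambda [w]_j\bigr) = -\lambda\|w\|^2.
\]
Substituting this back into the displayed expression for $\epsilon_{PD}(w,\alpha)$ yields exactly the claimed formula.

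There is no real obstacle here; the only subtle point is making sure one applies the identity $[w]_j = -\frac{1}{N\lambda}\sum_i \alpha_i [x_i]_j$ \emph{only} on the support $F$ (it fails off-support), and recognizing that the sum over $j \in F$ can be extended to a sum over all $j$ in $\|w\|^2$ for free because $[w]_j = 0$ outside $F$. Once that bookkeeping is clean, the proof is a one-line algebraic manipulation after invoking Lemma~\ref{lemma:concave}.
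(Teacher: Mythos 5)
Your proof is correct, but it takes a slightly different path than the paper's. The paper evaluates $D(\alpha)$ directly as $L(w(\alpha),\alpha)$ in its raw Lagrangian form, $\frac{1}{N}\sum_i(\alpha_i w^\top x_i - l^*_i(\alpha_i)) + \frac{\lambda}{2}\|w\|^2$, so the two $\frac{\lambda}{2}\|w\|^2$ terms cancel in $P(w)-D(\alpha)$ and the cross term $-\frac{1}{N}\sum_i\alpha_i w^\top x_i$ appears immediately; the whole proof is a two-line cancellation. You instead start from the collapsed formula $D(\alpha) = -\frac{1}{N}\sum_i l^*_i(\alpha_i) - \frac{\lambda}{2}\|w(\alpha)\|^2$ of Lemma~\ref{lemma:concave}, which leaves you with a residual $\lambda\|w\|^2$ that you must convert back into the cross term via the support identity $\frac{1}{N}\sum_i \alpha_i w^\top x_i = -\lambda\|w\|^2$. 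That identity is exactly the step (labelled $\zeta_1$ in the paper's proof of Lemma~\ref{lemma:concave}) that produced the collapsed formula in the first place, so you are in effect undoing a simplification and then redoing it; your verification of it on the support $F=\supp(w)$ is correct, including the observation that $[w]_j = -\frac{1}{N\lambda}\sum_i\alpha_i[x_i]_j$ only holds for $j\in F$ while the off-support coordinates contribute nothing. The net content is identical; your route is a bit longer but has the virtue of making explicit an identity the paper uses only implicitly, while the paper's route avoids any reference to the truncation structure inside this lemma.
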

\begin{proof}
It is directly to know from the definitions of $P(w)$ and $D(\alpha)$ that
\[
\begin{aligned}
&P(w) - D(\alpha) \\
=& \frac{1}{N}\sum_{i=1}^N l_i(w^\top x_i) + \frac{\lambda}{2} \|w\|^2 - \left(\frac{1}{N}\sum_{i=1}^N\left(\alpha_iw^\top x_i - l^*_i(\alpha_i)\right) + \frac{\lambda}{2}\|w\|^2\right) \\
=& \frac{1}{N}\sum_{i=1}^N \left(l_i(w^\top x_i) + l_i^*(\alpha_i) - \alpha_i w^\top x_i\right).
\end{aligned}
\]
This shows the desired expression.
\end{proof}

Based on Lemma~\ref{lemma:gap}, we can derive the following lemma which establishes a bound on the primal-dual gap.
\begin{lemma}\label{lemma:gap_bound}
Consider a primal-dual pair $(w,\alpha)$ satisfying
\[
w = \mathrm{H}_k\left(-\frac{1}{N\lambda} \sum_{i=1}^N \alpha_i x_i \right).
\]
Then the following inequality holds for any $g(\alpha)\in \partial D(\alpha)$ and $\beta \in [\partial l_1(w^\top x_1),...,\partial l_N(w^\top x_N)]$:
\[
P(w) - D(\alpha) \le \langle g(\alpha), \beta - \alpha\rangle.
\]
\end{lemma}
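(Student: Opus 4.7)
The plan is to start from the closed-form expression for the primal-dual gap supplied by Lemma~\ref{lemma:gap}, and then massage it term-by-term using Fenchel--Young at $\beta$ and subgradient convexity at $\alpha$ until it matches the right-hand side $\langle g(\alpha),\beta-\alpha\rangle$ with $g(\alpha)$ given by the super-differential formula in Lemma~\ref{lemma:concave}.

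First I would invoke Lemma~\ref{lemma:gap}, valid exactly because the hypothesis $w=\mathrm{H}_k(-\frac{1}{N\lambda}\sum_i \alpha_i x_i)$ is the one required there, to obtain
\[
P(w)-D(\alpha) \;=\; \frac{1}{N}\sum_{i=1}^N\bigl(l_i(w^\top x_i)+l^*_i(\alpha_i)-\alpha_i w^\top x_i\bigr).
\]
Next, since $\beta_i\in\partial l_i(w^\top x_i)$, the Fenchel--Young equality holds at $(w^\top x_i,\beta_i)$: $l_i(w^\top x_i)+l^*_i(\beta_i)=\beta_i w^\top x_i$. Substituting $l_i(w^\top x_i)=\beta_i w^\top x_i - l^*_i(\beta_i)$ into the gap expression rewrites each summand as $(\beta_i-\alpha_i)w^\top x_i+l^*_i(\alpha_i)-l^*_i(\beta_i)$.

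Then I would apply convexity of $l^*_i$ (which is convex as the conjugate of a convex function) to bound $l^*_i(\alpha_i)-l^*_i(\beta_i)\le l^{*'}_i(\alpha_i)(\alpha_i-\beta_i)$, where $l^{*'}_i(\alpha_i)$ is any subgradient selection used in defining $g(\alpha)$. Plugging this in yields
\[
P(w)-D(\alpha) \;\le\; \frac{1}{N}\sum_{i=1}^N(\beta_i-\alpha_i)\bigl(w^\top x_i - l^{*'}_i(\alpha_i)\bigr).
\]
Finally, by Lemma~\ref{lemma:concave} and the fact that $w=w(\alpha)$ under the stated hypothesis, the $i$-th coordinate of the super-gradient $g(\alpha)$ is exactly $\frac{1}{N}(w^\top x_i - l^{*'}_i(\alpha_i))$, so the right-hand side equals $\langle g(\alpha),\beta-\alpha\rangle$, completing the proof.

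There is no real obstacle: the argument is essentially a bookkeeping exercise combining Fenchel--Young at $\beta$ with the standard convexity inequality at $\alpha$. The only subtle point is consistency of the subgradient selections, i.e., the $l^{*'}_i(\alpha_i)$ chosen inside $g(\alpha)$ must be the same one used when applying convexity of $l^*_i$; since the lemma is stated for any $g(\alpha)\in\partial D(\alpha)$, one simply fixes that selection at the outset and carries it through the chain.
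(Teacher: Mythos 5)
Your proof is correct and follows essentially the same route as the paper's: both start from the gap expression in Lemma~\ref{lemma:gap} and reduce the bound to Fenchel--Young manipulations, with your single use of the Fenchel--Young equality at $\beta_i$ plus the subgradient inequality for $l_i^*$ being an equivalent reorganization of the paper's two conjugate identities plus the Fenchel--Young inequality. Your closing remark about fixing the subgradient selection $l_i^{*'}(\alpha_i)$ consistently is exactly the right care to take.
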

\begin{proof}
For any $i \in [1,...,N]$, from the maximizing argument property of convex conjugate we have
\[
l_i(w^\top x_i) = w^\top x_i l'_i(w^\top x_i) - l^*_i(l'_i(w^\top x_i)),
\]
and
\[
l^*_i(\alpha_i) = \alpha_i l^{*'}_i(\alpha_i) - l_i( l^{*'}_i(\alpha_i)).
\]
By summing both sides of above two equalities we get
\begin{equation}\label{ineqt:conv_DIHT_key1}
\begin{aligned}
&l_i(w^\top x_i) + l^*_i(\alpha_i) \\
 =& w^\top x_i l'_i(w^\top x_i) + \alpha_i l^{*'}_i(\alpha_i) -(l_i(l^{*'}_i(\alpha_i)) + l^*_i(l'_i(w^\top x_i))) \\
\overset{\zeta_1}{\le}& w^\top x_i l'_i(w^\top x_i) + \alpha_i l^{*'}_i(\alpha_i) - l^{*'}_i(\alpha_i) l'_i(w^\top x_i),
\end{aligned}
\end{equation}
where ``$\zeta_1$'' follows from Fenchel-Young inequality. Therefore
\[
\begin{aligned}
&\langle g(\alpha), \beta - \alpha\rangle  \\
 =& \frac{1}{N}\sum_{i=1}^N (w^\top x_i - l^{*'}_i(\alpha_i))(l'_i(w^\top x_i)-\alpha_i)  \\
 =& \frac{1}{N}\sum_{i=1}^N \left( w^\top x_i l'_i(w^\top x_i) - l^{*'}_i(\alpha_i) l'_i(w^\top x_i) -\alpha_i w^\top x_i + \alpha_i l^{*'}_i(\alpha_i)\right) \\
 \overset{\zeta_2}{\ge}& \frac{1}{N}\sum_{i=1}^N (l_i(w^\top x_i) + \alpha_i l^*_i(\alpha_i) -w^\top x_i ) \\
 \overset{\zeta_3}{=}& P(w) - D(\alpha),
\end{aligned}
\]
where ``$\zeta_2$'' follows from~\eqref{ineqt:conv_DIHT_key1} and ``$\zeta_3$'' follows from Lemma~\ref{lemma:gap}. This proves the desired bound.
\end{proof}

The following simple result is also needed in our iteration complexity analysis.
\begin{lemma}\label{lemma:tlnt}
For any $\epsilon>0$,
\[
\frac{1}{t} + \frac{\ln t}{t} \le \epsilon
\]
holds when $t\ge \max\left\{\frac{3}{\epsilon} \ln \frac{3}{\epsilon}, 1\right\}$.
\end{lemma}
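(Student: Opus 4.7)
The plan is to reduce the two-term bound $(1+\ln t)/t\le\epsilon$ to a monotonicity statement followed by one elementary inequality. I would first define $f(t)=(1+\ln t)/t$ and compute
\[
f'(t)=\frac{1-(1+\ln t)}{t^2}=-\frac{\ln t}{t^2},
\]
which is nonpositive on $[1,\infty)$; hence $f$ is nonincreasing there. Since the hypothesis forces $t\ge 1$, it suffices to verify $f(t_*)\le\epsilon$ at the threshold $t_*:=\max\{(3/\epsilon)\ln(3/\epsilon),\,1\}$.

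For the case $\epsilon\ge 1$, monotonicity alone finishes the job: $t_*\ge 1$ implies $f(t_*)\le f(1)=1\le\epsilon$. The only genuine content lies in the remaining regime $\epsilon<1$. There I would set $s=3/\epsilon>3$, so that $\ln s>\ln 3>1$, and consequently the first argument of the max dominates, giving $t_*=s\ln s>1$. Substituting $t_*=s\ln s$ into the target inequality $f(t_*)\le 3/s$, then multiplying through by $s\ln s$ and cancelling, reduces the claim to the scalar inequality
\[
1+\ln\ln s\le 2\ln s.
\]

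Writing $u=\ln s$, this becomes $2u-1-\ln u\ge 0$ for $u>1$. It follows immediately from the elementary bound $\ln u\le u-1$: indeed $2u-1-\ln u\ge 2u-1-(u-1)=u>0$. Combining the two regimes yields the lemma.

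The main obstacle, if it can be called one in so short a proof, is recognizing that the constant $3$ in the threshold is exactly what makes the reduction clean: it supplies the factor of $2$ on the right-hand side of $1+\ln\ln s\le 2\ln s$, leaving just enough slack to absorb the secondary $\ln\ln s$ term via $\ln u\le u-1$. Aside from that, the argument is pure one-variable calculus on the composition $s\mapsto s\ln s$.
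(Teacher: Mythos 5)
Your proof is correct, and it takes a somewhat different route than the paper's. The paper splits the quantity into its two terms and bounds each one separately for every $t$ above the threshold: $\frac{1}{t}\le\frac{\epsilon}{3}$ (using $\ln\frac{3}{\epsilon}\ge 1$) and $\frac{\ln t}{t}\le\frac{2\epsilon}{3}$ (using that $\ln t/t$ is eventually decreasing together with $\ln x< x$), then adds. You instead prove that the \emph{combined} function $f(t)=(1+\ln t)/t$ is nonincreasing on all of $[1,\infty)$, which lets you check the inequality only at the threshold point $t_*=s\ln s$, $s=3/\epsilon$, and reduces everything to the scalar inequality $1+\ln\ln s\le 2\ln s$, dispatched by $\ln u\le u-1$. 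The underlying arithmetic is nearly the same (both arguments ultimately spend the constant $3$ as $1+2$ against $\ln s$), but your organization has a concrete advantage: the paper justifies its second bound by asserting that $\ln t/t$ is decreasing for $t\ge 1$, which is false on $[1,e)$ and only harmless because the threshold exceeds $e$; your monotonicity claim for $(1+\ln t)/t$ on $[1,\infty)$ is exactly true, since $f'(t)=-\ln t/t^2\le 0$ there, so your proof avoids that small inaccuracy. The paper's term-by-term version, on the other hand, makes the budget $\frac{\epsilon}{3}+\frac{2\epsilon}{3}$ explicit and applies directly to every $t$ above the threshold without invoking monotonicity of the sum.
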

\begin{proof}
Obviously, the inequality $\frac{1}{t} + \frac{\ln t}{t} \le \epsilon$ holds for $\epsilon \ge 1$. When $\epsilon < 1$, it holds that $\ln (\frac{3}{\epsilon})\ge 1$. Then the condition on $t$ implies that $\frac{1}{t} \le \frac{\epsilon}{3}$. Also, we have
\[
\frac{\ln t}{t} \le \frac{\ln (\frac{3}{\epsilon}\ln \frac{3}{\epsilon})}{\frac{3}{\epsilon}\ln \frac{3}{\epsilon}} \le \frac{\ln (\frac{3}{\epsilon})^2}{\frac{3}{\epsilon}\ln \frac{3}{\epsilon}} = \frac{2\epsilon}{3},
\]
where the first ``$\le$'' follows the fact that $\ln t/t$ is decreasing when $t\ge 1$ while the second ``$\le$'' follows $\ln x < x$ for all $x>0$. Therefore we have $\frac{1}{t} + \frac{\ln t}{t} \le \epsilon$.
\end{proof}

We are now in the position to prove the main theorem.
\begin{proof}[of Theorem~\ref{thrm:DIHT_Conv}]
\noindent\textbf{Part(a):} Let us consider $g^{(t)} \in \partial D(\alpha^{(t)})$ with $g_i^{(t)}=\frac{1}{N}(x_i^\top w^{(t)}- l^{*'}_i(\alpha_i^{(t)}))$. From the expression of $w^{(t)}$ we can verify that $\|w^{(t)}\|\le r/\lambda$. Therefore we have
\[
\|g^{(t)}\| \le c_0=\frac{r + \lambda\rho}{\lambda\sqrt{N}}.
\]
Let $h^{(t)} = \|\alpha^{(t)} - \bar \alpha\|$ and $v^{(t)} = \langle g^{(t)}, \bar\alpha -\alpha^{(t)}\rangle$. The concavity of $D$ implies $v^{(t)}\ge 0$. From Lemma~\ref{lemma:strong_concavity} we know that $h^{(t)}\le \sqrt{2\lambda N^2 v^{(t)}/(\lambda N \mu + \sigma_{\min}(X,k))}$.  Then
\[
\begin{aligned}
(h^{(t)})^2 =& \|\mathrm{P}_{\mathcal{F^N}}\left(\alpha^{(t-1)}+\eta^{(t-1)} g^{(t-1)}\right) - \bar \alpha\|^2 \\
\le& \|\alpha^{(t-1)} + \eta^{(t-1)} g^{(t-1)} - \bar \alpha\|^2 \\
=& (h^{(t-1)})^2 - 2\eta^{(t-1)} v^{(t-1)} + (\eta^{(t-1)})^2 \|g^{(t-1)}\|^2 \\
\le& (h^{(t-1)})^2 - \frac{\eta^{(t-1)}(\lambda N \mu + \sigma_{\min}(X,k))}{\lambda N^2} (h^{(t-1)})^2 + (\eta^{(t-1)})^2 c_0^2.
\end{aligned}
\]
Let $\eta^{(t)} = \frac{\lambda N^2}{(\lambda N \mu + \sigma_{\min}(X,k))(t+1)}$. Then we obtain
\[
(h^{(t)})^2 \le \left(1-\frac{1}{t}\right) (h^{(t-1)})^2 + \frac{\lambda^2 N^4 c_0^2}{(\lambda N \mu + \sigma_{\min}(X,k))^2t^2}.
\]
By recursively applying the above inequality we get
\[
(h^{(t)})^2 \le \frac{\lambda^2 N^4 c_0^2}{(\lambda N \mu + \sigma_{\min}(X,k))^2} \left(\frac{1}{t} + \frac{\ln t}{t}\right) = c_1 \left(\frac{1}{t} + \frac{\ln t}{t}\right).
\]
This proves the desired bound in part(a).

\noindent\textbf{Part(b):} Let us consider $\epsilon = \frac{\lambda N \bar\epsilon}{2\sigma_{\max}(X)}$. From part(a) and Lemma~\ref{lemma:tlnt} we obtain
\[
\|\alpha^{(t)} - \bar \alpha\| \le \epsilon
\]
after $t \ge t_0 = \frac{3c_1}{\epsilon^2}\ln \frac{3c_1}{\epsilon^2}$. It follows from Lemma~\ref{lemma:uniqueness_gap} that $\supp(w^{(t)}) = \supp(\bar w)$.

Let $\beta^{(t)}:=[l'_1((w^{(t)})^\top x_1),...,l'_N((w^{(t)})^\top x_N)]$. According to Lemma~\ref{lemma:gap_bound} we have
\[
\begin{aligned}
\epsilon_{PD}^{(t)} &= P(w^{(t)}) - D(\alpha^{(t)}) \\
&\le \langle g^{(t)}, \beta^{(t)} - \alpha^{(t)}\rangle \\
& \le \|g^{(t)}\| (\|\beta^{(t)} -\bar \alpha\| + \|\bar\alpha- \alpha^{(t)}\|).
\end{aligned}
\]
Since $\bar\epsilon = \bar w_{\min} - \frac{1}{\lambda} \|P'(\bar w)\|_\infty >0$, it follows from Theorem~\ref{thrm:sparse_mini_max} that $
\bar\alpha = [l'_1(\bar w^\top x_1),..., l'_N(\bar w^\top x_N)]$. Given that $t \ge t_0$,  from the smoothness of $l_i$ and Lemma~\ref{lemma:uniqueness_gap} we get
\[
\|\beta^{(t)} - \bar\alpha\|\le \frac{1}{\mu} \|w^{(t)} - \bar w\| \le \frac{\sigma_{\max}(X,k)}{\mu\lambda N} \|\alpha^{(t)}-\bar\alpha\|,.
\]
where in the first ``$\le$'' we have used $\|x_i\|\le 1$. Therefore, the following is valid when $t \ge t_0$:
\[
\begin{aligned}
\epsilon_{PD}^{(t)} &\le  \|g^{(t)}\| (\|\beta^{(t)} -\bar \alpha\| + \|\bar\alpha- \alpha^{(t)}\|) \\
&\le c_0\left( 1 + \frac{\sigma_{\max}(X,k)}{\mu\lambda N} \right) \|\alpha^{(t)} - \bar\alpha\|.
\end{aligned}
\]
Since $t \ge t_1$, from part(a) and Lemma~\ref{lemma:tlnt} we get $\|\alpha^{(t)} - \bar \alpha\| \le \frac{\epsilon}{c_0\left( 1 + \frac{\sigma_{\max}(X,k)}{\mu\lambda N} \right)}$, which according to the above inequality implies $\epsilon_{PD}^{(t)} \le \epsilon$. This proves the desired bound.
\end{proof}

\subsection{Proof of Theorem~\ref{thrm:SDIHT_Conv}}
\label{append:proof_SDIHT_conv}

\begin{proof}
\noindent\textbf{Part(a):} Let us consider $g^{(t)}$ with $g_j^{(t)}=\frac{1}{N}(x_j^\top w^{(t)}- l^{*'}_j(\alpha_i^{(t)}))$. Let $h^{(t)} = \|\alpha^{(t)} - \bar \alpha\|$ and $v^{(t)} = \langle g^{(t)}, \bar\alpha -\alpha^{(t)}\rangle$. The concavity of $D$ implies $v^{(t)}\ge 0$. From Lemma~\ref{lemma:strong_concavity} we know that $h^{(t)}\le \sqrt{2\lambda N^2 v^{(t)}/(\lambda N \mu + \sigma_{\min}(X,k))}$. Let $g^{(t)}_{B_i}:=\mathrm{H}_{B_i^{(t)}}(g^{(t)})$ and $v_{B_i}^{(t)}:=\langle g^{(t)}_{B_i}, \bar\alpha -\alpha^{(t)}\rangle$ Then
\[
\begin{aligned}
(h^{(t)})^2 =& \|\mathrm{P}_{\mathcal{F^N}}\left(\alpha^{(t-1)}+\eta^{(t-1)} g_{B_i}^{(t-1)}\right) - \bar \alpha\|^2 \\
\le& \|\alpha^{(t-1)} + \eta^{(t-1)} g_{B_i}^{(t-1)} - \bar \alpha\|^2 \\
=& (h^{(t-1)})^2 - 2\eta^{(t-1)} v_{B_i}^{(t-1)} + (\eta^{(t-1)})^2 \|g_{B_i}^{(t-1)}\|^2.
\end{aligned}
\]
By taking conditional expectation (with respect to uniform random block selection, conditioned on $\alpha^{(t-1)}$) on both sides of the above inequality we get
\[
\begin{aligned}
&\mathbb{E}[(h^{(t)})^2\mid \alpha^{(t-1)}] \\
\le& (h^{(t-1)})^2 - \frac{1}{m}\sum_{i=1}^m 2\eta^{(t-1)} v_{B_i}^{(t-1)} + \frac{1}{m}\sum_{i=1}^m(\eta^{(t-1)})^2 \|g_{B_i}^{(t-1)}\|^2 \\
=& (h^{(t-1)})^2 - \frac{2\eta^{(t-1)}}{m} v^{(t-1)} + \frac{(\eta^{(t-1)})^2}{m} \|g^{(t-1)}\|^2 \\
\le& (h^{(t-1)})^2 - \frac{\eta^{(t-1)}(\lambda N \mu + \sigma_{\min}(X,k))}{\lambda m N^2} (h^{(t-1)})^2 + \frac{(\eta^{(t-1)})^2}{m} c_0^2..
\end{aligned}
\]
Let $\eta^{(t)} = \frac{\lambda m N^2}{(\lambda N \mu + \sigma_{\min}(X,k))(t+1)}$. Then we obtain
\[
\mathbb{E}[(h^{(t)})^2\mid \alpha^{(t-1)}] \le \left(1-\frac{1}{t}\right) (h^{(t-1)})^2 + \frac{\lambda^2 m N^4 c_0^2}{(\lambda N \mu + \sigma_{\min}(X,k))^2t^2}.
\]
By taking expectation on both sides of the above over $\alpha^{(t-1)}$, we further get
\[
\mathbb{E}[(h^{(t)})^2] \le \left(1-\frac{1}{t}\right) \mathbb{E}[(h^{(t-1)})^2] + \frac{\lambda^2 m N^4 c_0^2}{(\lambda N \mu + \sigma_{\min}(X,k))^2t^2}.
\]
This recursive inequality leads to
\[
\mathbb{E}[(h^{(t)})^2] \le \frac{\lambda^2 m N^4 c_0^2}{(\lambda N \mu + \sigma_{\min}(X,k))^2} \left(\frac{1}{t} + \frac{\ln t}{t}\right) = c_2 \left(\frac{1}{t} + \frac{\ln t}{t}\right).
\]
This proves the desired bound in part(a).

\noindent\textbf{Part(b):} Let us consider $\epsilon = \frac{\lambda N \bar\epsilon}{2\sigma_{\max}(X)}$. From part(a) and Lemma~\ref{lemma:tlnt} we obtain
\[
\mathbb{E} [\|\alpha^{(t)} - \bar \alpha\|] \le \delta\epsilon
\]
after $t \ge t_2 = \frac{3c_2}{\delta^2\epsilon^2}\ln \frac{3c_2}{\delta^2\epsilon^2}$. Then from Markov inequality we know that $\|\alpha^{(t)} - \bar \alpha\|\le \mathbb{E} [\|\alpha^{(t)} - \bar \alpha\|]/\delta \le \epsilon$ holds with probability at least $1-\delta$. Lemma~\ref{lemma:uniqueness_gap} shows that $\|\alpha^{(t)} - \bar \alpha\|\le \epsilon$ implies $\supp(w^{(t)}) = \supp(\bar w)$. Therefore when $t \ge t_2$, the event $\supp(w^{(t)}) = \supp(\bar w)$ occurs with probability at least $1-\delta$.

Similar to the proof arguments of Theorem~\ref{thrm:DIHT_Conv}(b) we can further show that when $t \ge 4 t_2$, with probability at least $1-\delta/2$
\[
\|\alpha^{(t)} - \bar\alpha\| \le \frac{\lambda N \bar\epsilon}{2\sigma_{\max}(X)},
\]
which then leads to
\[
\epsilon_{PD}^{(t)} \le c_0 \left( 1 + \frac{\sigma_{\max}(X,k)}{\mu\lambda N} \right) \|\alpha^{(t)} - \bar\alpha\|.
\]
Since $t\ge t_3$, from the arguments in part(a) and Lemma~\ref{lemma:tlnt} we get that $\|\alpha^{(t)} - \bar \alpha\| \le \frac{\epsilon}{c_0\left( 1 + \frac{\sigma_{\max}(X,k)}{\mu\lambda N} \right)}$ holds with probability at least $1-\delta/2$. Let us consider the following events:
\begin{itemize}
  \item $\mathcal{A}$: the event of $\epsilon_{PD}^{(t)} \le \epsilon$;
  \item $\mathcal{B}$: the event of $\|\alpha^{(t)} - \bar\alpha\|\le \frac{\lambda N \bar\epsilon}{2\sigma_{\max}(X)}$;
  \item $\mathcal{C}$: the event of $\|\alpha^{(t)} - \bar \alpha\| \le \frac{\epsilon}{c_0\left( 1 + \frac{\sigma_{\max}(X,k)}{\mu\lambda N} \right)}$.
\end{itemize}
When $t\ge \max\{4t_2, t_3\}$, we have the following holds:
\[
\mathbb{P} (\mathcal{A}) \ge \mathbb{P} (\mathcal{A}\mid \mathcal{B}) \mathbb{P} (\mathcal{B})\ge \mathbb{P} (\mathcal{C}\mid \mathcal{B})\mathbb{P}(\mathcal{B}) \ge (1-\delta/2)^2 \ge 1-\delta.
\]
This proves the desired bound.
\end{proof}

\end{document}